\documentclass{article}  % Comment this line out if you need a4paper

\usepackage{cite}
\usepackage{amsmath}
\usepackage{amsfonts}
\usepackage{amssymb}
\usepackage{graphicx}
\usepackage{comment}
\usepackage{xcolor}
\usepackage{bm}
\usepackage{subcaption}
\usepackage{algorithm}
\usepackage{algorithmic}
\newcommand{\RR}{\mathbb{R}}
\newcommand{\KK}{\mathbb{K}}

\newcommand{\TT}{\mathbb{T}}
\newcommand{\NN}{{\mathbb{N}}}

\newcommand{\knl}{{\mathfrak{K}}}
\newcommand{\rknl}{{\mathfrak{r}}}
\newcommand{\iH}[2]{{\left (#1,#2\right )_{H}}}

\newcommand{\RH}{{\mathcal{R}}}

\newtheorem{theorem}{Theorem}
\newtheorem{proof}{Proof}
\newtheorem{example}{Example}

\begin{document}

\title{Koopman Methods for Estimation of Animal Motions over
Unknown Submanifolds}

\author{Nathan Powell\thanks{ Department of Mechanical Engineering, Virginia Tech, Blacksburg. VA email: {nrpowell@vt.edu, boweiliu@vt.edu, kurdila@vt.edu}} \and Roger Liu \footnotemark[1] \and Jia Guo\thanks{ Department of Mechanical Engineering, Geogia Tech, Atlanta, GA. email: {jguo@gatech.edu}} \and Sai Tej Parachuri \thanks{Department of Mechanical Engineering and Mechanics, Lehigh University, Bethlehem, PA.  
email: {saitejp@lehigh.edu}} \and Andrew Kurdila \footnotemark[1] }

\maketitle

\begin{abstract}
This paper introduces a data-dependent approximation of the forward kinematics map for certain types of animal motion models. It is assumed that motions are supported on a low-dimensional, unknown  configuration manifold $Q$ that is regularly  embedded in a high dimensional Euclidean space $X:=\RR^d$.  This paper introduces a method to estimate a forward kinematics map from the unknown configuration submanifold $Q$ to an $n$-dimensional Euclidean space $Y:=\RR^n$ of observations. A known reproducing kernel Hilbert space (RKHS) is defined over the ambient space $X$ in terms of a known kernel function, and computations are performed using the known kernel defined on the ambient space $X$. Estimates are constructed using a certain data-dependent approximation of the Koopman operator defined in terms of the known kernel on $X$. However, the rate of convergence of approximations is studied in the space of restrictions to  the unknown manifold $Q$. Strong rates of convergence are derived in terms of the fill distance of samples in the unknown configuration manifold $Q$, provided that a novel regularity result holds for the Koopman operator. Additionally, we show that the derived rates of convergence can be applied in some cases  to estimates generated by the extended dynamic mode decomposition (EDMD) method. We illustrate characteristics of the estimates for simulated data as well as samples collected during motion capture experiments.

\end{abstract}

\section{Introduction}
Geometric methods that model the dynamics of complex multibody systems, such as those representing the motion of animals, are now well-established. \cite{bulloLewis2004geometric,lynch2017modern}. These techniques have been used in many cases to study the motion of an animal that is modeled as a collection of rigid bodies defined by bones that are interconnected by ideal joints. In these formulations, motions take values in certain types of smooth manifolds, and forward kinematic maps are defined over these manifolds. One attractive feature of these formulations is that such formulations are coordinate-free and intrinsic. \cite{bulloLewis2004geometric}  Still, for many animal motion studies, the dimension of the full configuration manifold can be large. The study of human motion, bat motion, or the flight of birds are just a few such cases \cite{Bender2015,bender2017gaussian}. For these animals, as well as others, it is often important to be able to identify submanifolds that provide accurate representations of particular motion regimes. 

This paper extends our most recent work in \cite{powell2020learning}. 
In the paper \cite{powell2020learning}, the authors explore methods to generate approximations of kinematics maps over certain smooth manifolds in applications that estimate the motion of reptiles. The analysis in \cite{powell2020learning} determines error bounds that closely resemble the more common case over Euclidean domains, but for the estimation of the functions over the motion manifolds.  However, in \cite{powell2020learning} it is assumed that a  known ``template manifold'' underlies a given motion regime.  This paper seeks to generalize these results to the case when the underlying motion manifold is unknown.

As we explain more fully below, this paper defines an estimation algorithm that does not require explicit knowledge of the unknown manifold. However, despite the fact that the manifold is unknown, strong direct theorems of approximation are derived that tell how fast estimates converge over the unknown manifold. The results are derived using Koopman theory, a popular operator-theoretic approach to the study of nonlinear dynamic systems. The analysis in this paper has the added benefit that it strengthens or refines convergence results in some recent papers on the approximation of Koopman operators.

In this paper, we assume that the motion of the animal is governed by the uncertain  equations
\begin{align}
    x_{i+1} &= f(x_{i}),\label{eqn:dynamics1}\\
    y_{i+1} &= G(x_{i+1}),
    \label{eqn:dynamics2}
\end{align}
where $X:=\RR^d$ is the state space, $Y:=\RR^n$ is the output space, the function $f:X\rightarrow X$ that determines the dynamics is unknown, and the observable function $G:X\rightarrow Y$ is  unknown. It is further assumed that this evolution law exhibits some additional ``hidden'' structure. Specifically, it is assumed that there is a compact, smooth Riemannian manifold $Q$ that is regularly embedded in $X$ that  is invariant under the flow generated by Equation  \ref{eqn:dynamics1}. The task is to collect  finite samples $\{(x_{k_i},y_{k_i})\}_{i=1}^M \in Z = X \times Y$ of state and output pairs at discrete times $\TT_M:=\{k_i\in \NN \ | \ 1\leq i\leq M\}$ along a trajectory  and use them to build an estimate of the observable function $G$. Here $\NN$ is the collection of nonnegative integers.  It is possible to choose the $k_i$ sequentially so that $k_1 = 1, k_2 = 2, \ldots $ etc, but the approach in this paper does not require uniform sampling in time. Ultimately, convergence of estimates of the forward kinematics map depends on the fact that the samples $\{x_{k_i}\}_{i=1}^M$ ``fill out'' $Q$ as $M\to \infty$, whether they are samples uniformly in time or not.

The analysis of the system in Equations \ref{eqn:dynamics1} and \ref{eqn:dynamics2} is carried out using the Koopman operator.  Recall that for this system the Koopman operator $U_f$ is defined via the  composition operator
$$
U_f g:=g\circ f 
$$
for any function $g$ in some suitable family. 
The Koopman operator can be used in a number of different ways to understand both the qualitative and quantitative behavior of nonlinear dynamical systems. A more careful review of relevant recent research along these lines is given in Section \ref{sec:relatedresearch} below. For now  we just review how the system above can be related to forecasting problems expressed in terms of the Koopman operator.  The study of forecasting problems is one of the more direct applications of  Koopman operators: see reference \cite{giannakis2020} for a detailed account of forecasting using the Koopman operator in continuous time.  

Suppose that $\{S(t)\}_{t\geq0}$ is a semigroup on the state space $X$ that defines a dynamical system. By definition the trajectory of the  system induced by the semigroup  is given by 
\begin{equation}
    x(t+h) = S(h)x(t)
\end{equation}
for each $t\geq 0$ and $h\geq 0$.  Correspondingly, the outputs  $y(t+h)\in Y$ at time $t+h$  are defined as 
\begin{equation}
    y(t+h) = G(x(t+h)). 
\end{equation}
The problem of forecasting in this continuous time setting is to predict the future output $y(t+h)$ at the  time $t+h$ using knowledge of the state $x(t)$ at time $t$. The problem in continuous time can be simplified by discretizing the system at times $t_i = ih$. By setting $x_i := x(ih)$ and $y_i := y(ih)$, we have  
\begin{align}
    x_{i+1} &= S(h)x_i \label{eq:dyn1} \\
    y_{i+1} &=G(x_{i+1}) = (G\circ f)(x_i) = (U_f G)(x_i). \label{eq:dyn2}
\end{align}
By choosing $f(x):=S(h)x$, the discretized forecasting problem has the form of our governing Equations \ref{eqn:dynamics1} and \ref{eqn:dynamics2}. 
The above question makes clear that if we are able to approximate the action of the Koopman operator $U_f$ on any $g$, we can use the approximation to obtain a forecast $y_{i+1} = (U_f G)(x_i)$ of the future output from knowledge of the current state.
 Here we also note that the $i^{th}$ step of the discrete flow in Equation \ref{eqn:dynamics1} is given by
$$
x_{i+1}:= ((U_f)^{i}f)(x_0). 
$$
Thus, obtaining an approximation of the action $U_f$ on $f$ can enable forecasting for a future state of the system. In this sense it is possible to interpret approximation of $U_f$ as solving a problem of system identification. 

The general strategy outlined above is quite practical. Examples \ref{ex:ex1} and \ref{ex:ex2} described below show that the model class described above is very rich: there are a wide variety of  models of animal motion that have the form above  that are obtained when geometric methods \cite{bulloLewis2004geometric,lynch2017modern} are discretized in time. 

\begin{example}
\label{ex:ex1}
{This first example illustrates how quite general motion estimation problems of the type governed by Equation \ref{eqn:dynamics1} may arise. For a wide variety of animal models, it is popular to create models that consist of rigid bodies connected by ideal joints. Such a model for human motion can be constructed by choosing as states the rigid body motion of a core body, spherical joints for the hips and shoulders, and revolute joints for all the remaining joints of the body. For human motion, the full configuration manifold might be taken as} 

\begin{align*}
\hat{X}:=\underbrace{SE(3)}_{\text{core body}} &\times \underbrace{SO(3)\times S^1 \ldots S^1}_{\text{left arm}} \times \underbrace{SO(3)\times S^1 \ldots S^1}_{\text{right arm}} \\
&\times  \underbrace{SO(3)\times S^1 \ldots S^1}_{\text{left leg}} \times \underbrace{SO(3)\times S^1 \ldots S^1}_{\text{right leg}}.
\end{align*}

{\noindent In the above expression $SE(3)$ and $SO(3)$ designates the special Euclidean group and special orthogonal group,  respectively, in three dimensions. The symbols $S^1$ is the circle, or one-dimensional torus. Since these are all compact Riemannian manifolds, it is known by the Nash embedding theorem that $\hat{X}\subset X=\RR^d$ for a $d$ large enough. 
Standard geometric methods as in \cite{bulloLewis2004geometric,lynch2017modern} describe how equations of motion generate a flow $\{S(t)\}_{t\in \RR^+}$ in continuous time over $\hat{X}$. By discretizing the flow at times $t_i:=ih$ with $h$ a fixed time step, we obtain the evolution law $x_{i+1}=S(h) x_i$. The discretization yields  dynamics of the form in Equation \ref{eqn:dynamics1} with $f=S(h):\hat{X}\subset X \rightarrow \hat{X}\subset X$. 
If we suppose that the output space $Y$ consists of the inertial positions of $j$ distinguished points on the body, the kinematics map $G: \hat{X} \rightarrow \RR^{3j}:=Y$. In motion capture studies that seek to study certain motion regimes, it is then natural to hypothesize an existence of an unknown submanifold $Q\subseteq \hat{X}\subseteq X$. At the discrete times $\TT_M:=\{k_i\ | \ 1\leq i\leq M\}$, a set of sample states $\Xi_M:=\{\xi_{M,i}:=x_{k_i} \ | \ 1\leq i\leq M\}$ and outputs $\Upsilon_M:=\{\upsilon_{M,i}:=y_{k_i+1} \ | \ 1\leq i\leq M\}$ are collected along trajectories of the uncertain system. 
The estimation problem is then to use the measurements $\{(\xi_{M,i},\upsilon_{M,i})\}_{i=1}^
M$ to build approximations of $U_f G:Q\rightarrow Y$, where manifold $Q$ is unknown. The strategy in this paper can be realized, at least in principle, using RKH spaces with bases that are the product of bases for functions over $SO(3)$ or other homogeneous manifolds. Such bases are studied, for example, in \cite{hangelbroek2011surface,hangelbroek2012polyharmonic}. }
\end{example}

\begin{example}
\label{ex:ex2}
In this paper, we perform numerical studies of some simpler problems that are common in studies of animal motion. Instead of modeling or studying motions in the full configuration space above, it is often of interest to understand the motion for just an appendage, or across a joint. Such studies can be used in the design of prosthetics. For simplicity, we carry out numerical studies in this paper when $Q$ is a unknown Riemannian manifold that is regularly embedded in $X:=\RR^d$. Again, if we assume that $Y$ is defined in terms of the inertial positions of $j$ points on the body, we have $G:X\rightarrow Y$ with $Y:=\RR^{3j}$. The estimation task is to use samples $\{(\xi_{M,i},\upsilon_{M,i})\}_{i=1}^
M \subset X\times Y$ to build approximations of $G$, even though the manifold $Q$ is unknown. 
\end{example}

\subsection{Related Research}
\label{sec:relatedresearch}
Overall, the recent research that is most relevant to this paper includes studies of formulation of animal motion, machine learning theory, and Koopman operator theory. We examine how each category relates to this paper in turn. 

A theoretical foundation for the formulation of models of animal motion can be found in \cite{bulloLewis2004geometric,lynch2017modern}.  
In addition, there are a wide variety of kinematic studies that  evaluate specific animal motions \cite{Birn-Jeffery2016,Ingersoll2018,porro2017inverse}. Some notable recent studies have focused on discovering fundamental principles of locomotion \cite{aguilar2016review} and lower-dimensional representations of motion models across a vast number of different animals \cite{catavitello2018kinematic}. However, it seems that  applying learning algorithms to obtain kinematic models has been primarily applied to human motion \cite{brand2000style,wang2007gaussian}. Recent learning methods have utilized neural networks and deep learning to synthesize and predict human motion \cite{butepage2017deep,jain2016structural}. To address the time-dependent nature in synthesizing human motion, recurrent neural networks (RNNs) have been implemented  \cite{fragkiadaki2015recurrent,li2017auto}. To the authors' knowledge, none of these efforts case the animal motion estimation problem in terms of Koopman theory, as is done in this paper.

Some parts of this paper extend certain problems of machine or statistical learning theory to the case where we want to estimate functions (kinematics maps) over manifolds.  A theoretical background on distribution-free  learning theory over Euclidean spaces can be found in \cite{gyorfi2006distribution,devore2004mathematical,devore2006approximation}. Versions of learning theory cast over RKH spaces are also relevant, particularly when we want to extend the learning problems to some manifold. The approaches in \cite{williams2006gaussian} and \cite{scholkopf2002learning}, that are formulated in Euclidean Spaces, are a good starting point in this case.

Recent research in Koopman theory often synthesizes the viewpoints of learning theory and dynamical systems theory. Koopman operator theory has been one of the primary tools that is used to pose learning problems over time-dependent data, especially with strong nonlinear dynamics are at play \cite{mezic2020koopman}. Excellent texts such as \cite{lasota2013chaos,mezic2015applications,brunton2019data} as well as a number of survey articles including \cite{brunton2021modern} give ample motivation for the study and use of Koopman theory.     As observed in \cite{budivsic2012applied}, with increases in computational power, the proliferation of experiments that yield high-dimensional data, and breakthroughs in the study of linear dynamics, there has been a surge of renewed interest in the Koopman operator and its spectral decomposition over the last decade.  An extensive overview of the spectral properties of both the discrete and continuous Koopman operator can be found in \cite{brunton2019data}, \cite{brunton2021modern}. 

As explained in  studies on Koopman theory such as \cite{brunton2019data,budivsic2012applied,korda2018convergence}, perhaps the most compelling theoretical reason for the use of Koopman operator theory is the trade-off it enables from the study of a nonlinear system to an linear system. Most frequently, the approach is used when the original system as in Equation \ref{eqn:dynamics1} is nonlinear and finite-dimensional. Via Koopman theory, the system governing the evolution of observables in Equation \ref{eq:dyn2} is linear,  as is the Koopman operator $U_f$. However, the linear system that governs the dynamics of the observables is infinite-dimensional.  Consequently, Koopman theory often requires a careful consideration of practical finite-dimensional approximations of the infinite-dimensional operator. Many recent studies focus on the convergence of approximating the Koopman operator or quantities associated with the Koopman operator such as the Koopman modes \cite{budivsic2012applied} \cite{rowley2009spectral}.

Convergence studies of Koopman theory have also been related to dynamic mode decomposition  (DMD) algorithm\cite{tu2013dynamic}. A number of studies have used the Koopman operator framework to examine the rates of convergence of EDMD method \cite{williams2015data} \cite{klus2016} to the Galerkin approximation of the Koopman operator. Additional convergence studies include those based on the infinitesimal generator \cite{klus2020} of the Koopman operator, as well as convergence of eigenfunctions of the Koopman operator \cite{korda2018convergence}. The behavior of these eigenfunctions under conjugate transformations are examined in \cite{mezic2020numerical}. As powerful and promising as these methods are, none of these references study rates of convergence of the estimates. 

The primary difference between the approach taken in this paper to these recent efforts like \cite{brunton2019data,budivsic2012applied,korda2018convergence} is that we utilize the structure of direct approximation theorems to describe rates of convergence of approximations to the Koopman operator. To clearly explain the contributions of this paper, we review the structure of direct approximation theorems in Section \ref{sec:direct_thms}. A careful description of convergence of approximations of the Koopman operator in recent efforts \cite{giannakis2020,williams2015data,budivsic2012applied,klus2020,mezicbook} is given in Section \ref{sec:approx_koopman}. Finally, these two subsections are employed in Section \ref{sec:overview} to be precise about the nature of approximation theorems for Koopman operators in the recent literature.
\subsubsection{Direct Approximation Theorems}
\label{sec:direct_thms}

The convergence of approximations of the Koopman operator in the recent literature has been studied in the uniform, strong, and weak operator topologies. We only discuss the uniform and strong operator topologies in this paper as they seem most relevant to the approach in this paper. 

Let $(W,\|\cdot\|_W)$ be some normed vector space and denote by $\cal{L}(W)$ the collection of bounded linear operators that map $W$ into itself. By definition, a sequence of bounded linear operators $\{T_n\}_{n\in \NN}$ converges in the uniform operator topology if it holds that 
$
\| T_n - T \|_\infty \rightarrow 0
$
as $n\rightarrow \infty$ where the operator norm $\|\cdot\|_\infty$ is defined in the usual way as 
$$
\|T\|_{\infty}:=\sup_{
w\in W, v\not = 0} \frac{\|Tw\|_W}{\|w\|_W}.
$$
The sequence $\{T_n\}_{n\in \NN}\subset \cal{L}(W)$ converges to $T\in \cal{L}(W)$ in the strong operator topology if for any $w\in W$ we have $\|T_nw-Tw\|_W\rightarrow 0$ as $n\to \infty$. 
The uniform operator topology is stronger (finer) than the strong operator topology since convergence of $\|T-T_n\|_\infty\rightarrow 0$ implies that $\|Tw-T_nw\|_W\leq \|T-T_n\|_\infty \|w\|_W \rightarrow 0$.
It is known that an operator $T$ is compact if and only if there is a sequence of finite dimensional  linear operators $\{T_n\}_{n\in \NN}\subset \cal{L}(W)$ that converge to $T$ in the operator norm. It is a rather special case that Koopman operators are compact, but this case is studied under certain circumstances, for  example in  references \cite{korda2018convergence}, \cite{klus2020a}, and \cite{kurdila2018koopman}, which are discussed more fully below. 

There is a  standard framework in approximation theory for characterizing the rates of convergence of approximations that can be used to study  the strong convergence of operators. This setup makes use of another linear space  $S\subseteq W$. We use the symbol $S$ since it has a stronger topology  than that on the weaker space $W$: it is assumed that $S$ is dense in $W$ and that there is a constant $C>0$ such that $\|w\|_W\leq C\|w\|_S$ for all $w\in S$. Approximations are constructed in terms of a family $\{W_N\}_{N\in \NN}$ that is a sequence of  finite dimensional subspaces whose union is dense in $W$. Suppose that $P_N:W\rightarrow W_N$ is a family of uniformly bounded mappings, each $P_N$ onto $W_N$.  We say that the pair of sequences $(W_N,P_N)_{N\in \NN}$ and the pair of spaces $(W,S)$ satisfies a direct theorem of approximation with approximation rate $N^{-r}$ if there is some other constant $\tilde{C}$ such that  we can write
\begin{align}
\label{eq:direct}
\|w-P_Nw\|_W\leq \tilde{C} N^{-r}\|w\|_S
\end{align}
for all $w\in S\subseteq W$. This inequality can be viewed as a statement that functions in the ``smaller, more regular'' space $S$ converge at a rate $r$ that characterizes such regularity.  Often in machine learning theory or approximation theory the membership in the space $S\subseteq W$ is referred to as a ``prior'' on a function $f\in W$: it is information that conveys that approximations of such functions converge at a certain rate.  The study of how priors  and approximation operators $P_n$ can be selected to yield direct theorems of approximation is one of the central problems of approximation theory. This problem has been studied in great detail, see \cite{devore1993constructive,wendland2004scattered} for a wide collection of choices of the spaces $W$ and $S$ over subsets of Euclidean Space. In this paper we use this construction to study the rates of convergence of approximations of the Koopman operator in the strong operator topology.  We apply the analysis above when $w$ in Equation \ref{eq:direct} is chosen to be $w:=U_f g$, with $U_f$ the Koopman operator of a discrete deterministic system. The rate of approximation $N^{-r}$ will be defined in terms of the fill distance of samples in the manifold.

\subsubsection{Approximations of Koopman Operators}
\label{sec:approx_koopman}
As mentioned above, one of the compelling reasons that motivates the use of Koopman operators is the fact that, while the original dynamics is determined by a nonlinear function $f$ that acts on a finite dimensional space, the Koopman operator is linear and acts on an infinite dimensional space of functions. Any practical use of a Koopman operator must address questions related to its approximation. 
Of the recent articles \cite{giannakis2020,korda2018convergence,klus2016,mezic2020numerical,klus2020a,klus2016,kurdila2018koopman} that study approximations of the Koopman operator under various working assumptions, perhaps the most relevant to this paper are \cite{korda2018convergence,giannakis2020,klus2020a,kurdila2018koopman}. We review each of these papers in some detail below to make clear the often subtle difference among the references and to clarify the contributions of this paper.

Reference \cite{korda2018convergence} is a detailed analysis of the convergence of approximations $U_{M,N}$ determined by the EDMD algorithm of the Koopman operator $U_f$. The EDMD algorithm and its relationship to Koopman theory is discussed in some detail in Section \ref{sec:edmd}. The integers $N,M$ denote the number of samples $M$ and the dimension $N$ of a  subspace of approximants $H_N$ used to build estimates. The convergence analysis is carried out in the Lebesgue space $L^2_\mu(Q)$ where $\mu$ is a measure on the manifold $Q$.  The paper studies carefully the relationship of approximations of the EDMD algorithm of approximants $H_n\subset L^2_\mu(Q)$. A few different types of convergence are studied. Let $P_{\mu,n}:L^2_\mu\rightarrow H_n$ denote the $L^2_\mu$-orthogonal projection of $L^2_\mu$ onto $H_n$, and define the projected operator  $U_n:=P_{\mu,N} U_f|_{H_N}$ with $U_f|_{H_N}$ the restriction to the space of approximants $H_N$.  The paper shows in Theorem 2 that   {\em for a fixed dimension $N$ of the approximant subspace $H_N$}, the EDMD approximations $U_{M,N}$ converge to  the  projected operator $U_N:=P_{\mu,N} U|_{H_N}$  as $m\rightarrow \infty$ in any operator norm (which are equivalent on $H_N$ due to the assumption that $N$ is fixed and finite). The paper also shows that $P_{N,\mu} U_f P_{\mu,N}\rightarrow U_f$ as $N\rightarrow \infty$ in the strong operator topology over $L^2_\mu(Q)$.  The paper \cite{korda2018convergence} does not address how the choice of different kernels influences or affects the choice of the space $H\subseteq L^2_\mu(Q)$, nor how the choice of such a prior determines  the rate of convergence of the approximations. 

An altogether different strategy is used to derive rates of convergence of certain types of Koopman operators in \cite{klus2020a}. In contrast to \cite{korda2018convergence} above, a regularized {\em kernel } Koopman operator $\cal{U}^\epsilon$ is introduced in terms of an integral operator that depends on covariance operators associated with the inputs and outputs. In this notation $\epsilon$ is the regularization parameter.  Using $M$ samples, regularized approximations of the Koopman operator $\mathcal{U}^\epsilon$, as well as the associated regularized Frobenius-Perron operators, are defined in terms of integral operators whose integrands are the  kernel of an RKH space defined over some compact set. Theorem 3.14 of \cite{klus2020a} derives rates of convergence of empirical estimates $\hat{\mathcal{U}}^\epsilon_M$ to the {\em regularized operators} ${\mathcal{U}}^\epsilon$.   It is shown that the operator norm error $\|\mathcal{U}^\epsilon - \hat{\mathcal{U}}^\epsilon_M\|=O_p(M^{-1/2}\epsilon^{-1})$ as the number of samples $M\rightarrow \infty$. Here the notations $O_p(\cdot)$ denotes the order in probability. In brief comments following the theorem, it is noted that this bound implies that the the regularization parameter should be chosen to be varying as the number of samples $M$ increases. As above, this  convergence guarantee does not explain choice of kernels  affects the  rate of convergence for   the choice of different kernels or priors. 

Reference \cite{giannakis2020} studies   kernel analog filtering (KAF) for certain types of forecasting problems. This paper considers deterministic dynamical systems that evolve in continuous time over a set $\Omega$.  Moreover, it is assumed that the dynamical systems are measure preserving. The kernel analog filtering problem seeks a function $f_h$ that minimizes
$
    \|f_h \circ X - {\cal{U}}^h Y \|_{L^2_\mu(\Omega)}
$
with $\mu$ the invariant measure of the underlying dynamical system and $h$ is a time step or lag time. Here the Koopman operator ${\cal{U}}^h$ acting on the output function $Y$ corresponds to the composition of $Y$ acting on a flow map $\Phi_h: X \to X$ over a time-step $h$. 
Approximations in the paper \cite{giannakis2020} are studied  in terms of eigenfunctions of a compact, self-adjoint operator associated with an integral operator having as integrand a kernel of an RKH space.
For theoretical purposes, subspaces of approximants are defined in terms of the span of a finite collection of $N$  eigenfunctions. Finite dimensional approximations of the function $f_h$ in the paper can be understood as an  approximation of a Koopman operator, see Equation (19) of that reference. 
The convergence of approximations of the Koopman operator are studied in the Lebesgue space $L^2_\mu(\Omega)$ where $\mu$ is the measure that is invariant under the semigroup that determines the dynamical system. Theorem 11 proves convergence of the estimates as  the dimension $N$ of the eigenspace used for approximations approaches infinity. In this sense Theorem 11 of \cite{giannakis2020} qualitatively  resembles the convergence result in Theorem 2 in  \cite{korda2018convergence}. Theorem 14 of \cite{giannakis2020} additionally shows that empirical approximations, based on $N$ eigenfunctions and $M$ samples, converge to a  regression function as $M,N\rightarrow \infty$. This conclusion is stronger than the analysis in \cite{korda2018convergence} above. However, this paper does not discuss how the choice of the kernel, and a set of priors, affects the rate of convergence.

In contrast to the previous studies, reference \cite{kurdila2018koopman} does examine rates of convergence using direct theorems of approximation of the Koopman operator under a few different situations. However, the  approximations are
generated by finite dimensional bases of wavelets, multi-wavelets, and eigenfunctions of the Koopman operator. Such constructions are only useful when the finite-dimensional bases are ``well adapted" to the dynamics under study. Cases such as these might include evolutions on a torus where periodized standard bases can be defined. 
The studies \cite{powell2020learning}\cite{powell2021distribution} examine the rates of convergence of the EDMD estimate to the Koopman operator by interpreting it as an empirical risk minimizer as examined in \cite{devore2004mathematical}. In the current paper, we consider smoother estimates generated from reproducing kernel Hilbert (RKH) spaces. This method is well-suited to taking scattered samples over unknown or irregular domains, or over subsets of manifolds.

\subsection{Overview of New Results}
\label{sec:overview}
To be precise about the contributions of this paper to the rapidly growing field of Koopman theory, we briefly summarize the overall strategy taken in this paper. This overall strategy is common to many of the approaches outlined in References \cite{budivsic2012applied,brunton2021modern,klus2016,korda2018convergence,giannakis2020,mezic2020numerical,klus2020a}, but, as mentioned in the introduction, we focus on determining how to ensure error bounds not found in these references. Since $Q\subset X$ is unknown, but the ambient space $X$ is known, a kernel function  $\knl:X\times  X\rightarrow \RR$ is selected that defines a reproducing kernel Hilbert space  (RKHS)  $H$ of real-valued functions over the known, ``large set'' $X$. A kernel section, or kernel basis function, centered at a point $x\in X$ is defined in terms of the kernel as $\knl_x(\cdot):=\knl(x,\cdot)$. Finite dimensional spaces of approximants $H_M$ are defined in terms of bases $\{\knl_{\xi_{M,i}}\}_{i=1}^M$, which  depend on some set of samples $\Xi_M:=\{\xi_{M,i}:=x_{k_i} \ | \ 1\leq i\leq M\}$. As mentioned before, these samples are collected along trajectories of the uncertain system at the discrete times $\TT_M:=\{k_i\ | \ 1\leq i\leq M\}$, which do not need be uniformly sampled in time. Hence, we have the spaces of approximants $H_M:=\text{span}\{ \knl_{\xi_{M,i}} \ | \ \xi_{M,i}\in \Xi_M\}$. Approximations of the observable function $G$ for the uncertain system are built using the finite dimensional spaces $H_M$ and the $H$-orthogonal projection $\Pi_M:H\rightarrow H_M$. Data-dependent approximations $U_f^M$ of the Koopman operator $U_f$ are defined by the identity $U_f^M g:=\Pi_M( (\Pi_M g)\circ f)$, which has a coordinate representation shown in Equation  \ref{eq:UMf_coord} in Section \ref{sec:data_driven}. By inspection of the coordinate expression, it is clear that the approximation $U_f^M$ of the Koopman operator $U_f$ can be constructed from the samples $\{(\xi_{M,i},\upsilon_{M,i})\}_{i=1}^
M$ and the choice of finite dimensional space $H_M$. This approximation does not require knowledge of the unknown function $f$. This paper introduces an approximation method in which the distribution of the samples $\{\xi_{M,i}\}_{i=1}^M$, the definition of the space $H_M$, and the choice of the kernel $\knl$ determines the rate of convergence of $U^M_fg\rightarrow U_fg$. It should be emphasized that we make no assumption of ergodicity, nor that the evolution preserves a fixed measure.

%\subsubsection{Contributions}
There are three distinct contributions in this paper to the state-of-the-art in estimation methods for the system in Equations \ref{eqn:dynamics1} and \ref{eqn:dynamics2}. 

\underline{(1) \it Reduced Order Models of Animal Motion:}  Although Koopman methods have been studied for a wide variety of application areas, this paper is the first systematic study of the method for characterizing animal motions over submanifolds. The approach should be of general interest to those studying reduced order models of animal motions. The paper describes a rigorous formulation of a general method for approximating such motions, and the theory includes estimates of the rates of convergence in various function spaces as the number of samples $M$ (equal to the dimension of the subspace of approximants) increases. Such estimates are not available in the literature on motion estimation such as in  \cite{brand2000style,wang2007gaussian,jain2016structural,butepage2017deep,fragkiadaki2015recurrent,li2017auto}.
\newline \underline{(2) \it Convergence Rates for Koopman Approximations:}  In addition, this paper descr- ibes general, strong rates of convergence of data-dependent for some approximations of the Koopman operator. This result is of interest in its own right. Full details on the derivation and interpretation of these rates is given in the discussion of Theorem \ref{th:approx_koopman} below, but we briefly summarize the structure and novelty of these rates here.   
The primary result of this paper defines a  regularity assumption in the framework of Koopman theory that is particularly powerful and enables determination of strong error bounds. The regularity assumption  states that if the pullback space $f^*(H):=\{g:Q\rightarrow \RR\ | \ g=h\circ f, h\in H\}$ is continuously embedded in a Sobolev space, then we get strong rates of convergence. These rates of approximation are bounded by $O(h_{\Xi_M,Q}^r)$ where $h_{\Xi_M,Q}$ is the fill distance of samples $\Xi$ in the manifold $Q$,
$$
h_{\Xi_M,Q}:=\underset{q\in Q}{\text{sup}} \ \underset{\xi_{M,i}\in \Xi_M}{\text{inf}} d_Q(q,\xi_{M,i})
$$
where $d_Q$ is the metric over $Q$ and $r$ is an index that measures smoothness in a Sobolev space. Note that this is a special case of the rate of convergence in Equation \ref{eq:direct} where $h^r_{\Xi_M,Q}$ is the same order as $N^{-r}$. Thus, even though the manifold $Q$ is unknown, we obtain a geometric characterization of the error. The convergence depends on how quickly samples fill up the unknown manifold. These rates for the Koopman approximation problem are qualitatively similar to  rates of convergence that are familiar in regression in Euclidean spaces \cite{gyorfi2006distribution}, interpolation over manifolds \cite{fuselier,hangelbroek2012polyharmonic}, Chapter 10 of \cite{wendland2004scattered}, and ambient approximation over manifolds \cite{maier2018}. None of the recent studies of Koopman approximations \cite{klus2016,korda2018convergence,giannakis2020,mezic2020numerical,klus2020a} derive such rates of convergence in terms of fill distances. 
\newline \noindent \underline{(3) \it Convergence Rates of the EDMD Algorithm} Finally, this paper defines a general data-dependent approximation of the Koopman operator as it acts on a variety of function spaces. In some instances, this data dependent operator can be interpreted as a specific implementation of the extended dynamic mode decomposition (EDMD) algorithm. With this identification it is possible to derive rates of convergence for the EDMD algorithm in certain types of function spaces. This result is described in Theorem \ref{th:edmd} and also has no precedent in the recent studies of the EDMD algorithm, as in \cite{klus2016,mezic2020numerical}. 

\section{Background Theory}
\subsection{Symbols and Nomenclature}
In this paper, $\RR$ is the set of the real numbers. We write $a \lesssim b$ to mean that there is a constant $c$ that does not depend on $a,b$ such that $a\leq cb$, and $\gtrsim$ is defined similarly. We let $X=\RR^d$ be the input space, and $Y:=\RR^n$ be the output space. We assume that $Q$ is a smooth, compact Riemannian manifold that is regularly embedded in $X=\RR^d$. The symbol $C(Q,Y)$ represents the space of continuous, $Y$-valued functions over the manifold $Q$, equipped with the usual supremum norm. We set $L^2_\mu(Q,Y)$ to be the usual Lebesgue space of $Y$-valued, $\mu$-integrable functions over the manifold $Q$, with $\mu$ a measure on the manifold. If the second argument is left out, it is understood that the output space is simply real-valued $C(Q)= C(Q,\RR)$, $L^2_\mu(Q)= L^2_\mu(Q,\RR)$, etc. This paper also employs the scalar-valued Sobolev space $W^{s,p}(Q)$ over the compact Riemannian manifold $Q$, which is defined intrinsically in \cite{hangelbroek2012polyharmonic} or extrinsically in \cite{fuselier}. Roughly speaking, for integer $s>0$ this space consists of all functions in $L^p_\mu(Q)$ that have derivatives through order $s$ that are elements of $L^p_\mu(Q)$. The Sobolev spaces for real smoothness parameter $s>0$ are obtained as interpolation spaces from those that have integer order. \cite{hangelbroek2012polyharmonic} We write $W^{s,p}(Q,Y)$ for the $Y$-valued Sobolev spaces, $W^{s,p}(Q,Y):=(W^{s,p}(Q))^n$ when $Y=\RR^n$.

\subsection{Reproducing Kernel Hilbert Spaces}
\label{sec:kernelspaces}
A number of constructions in reproducing kernel Hilbert spaces are used in this paper, and the reader is referred to standard sources like \cite{berlinet2011reproducing,saitoh2013reproducing} for a good introduction. The summary here is necessarily brief. Throughout the paper $\knl:X\times X\rightarrow \RR$ will be an admissible kernel that defines a native space $H$ of real-valued functions over $X$. We always assume that the kernel $\knl$ is continuous, symmetric,  and strictly positive definite in this paper, which means that for any finite set $\Xi_M:=\{\xi_{M,i} \ | \ 1\leq i\leq M\}$ of distinct points, the kernel matrix  $[\knl(\xi_{M,i},\xi_{M,j})]:=\KK(\Xi_M,\Xi_M)\in \RR^{M\times M}$ is a positive definite matrix. Kernels of this type include the exponential kernel, inverse multiquadric kernel, or Matern-Sobolev kernel, among others \cite{wendland2004scattered}. 
 The space $H$ that is generated by a subset $\Omega \subseteq X$ is defined as 
$
H:=\overline{\text{span}\{\knl_x \ | \ x\in \Omega\}}
$ 
where the closure is taken with respect to the inner product that is defined as $\iH{\knl_x}{\knl_y}=\knl(x,y)$ for all $x,y\in X$. 
The RKHS $H$ is characterized by the fact that it satisfies the reproducing property: we have $\iH{\knl_x}{g}=g(x)$ for all $x\in X$ and $g\in H$. 

In this paper, samples are generated along the trajectories of a discrete dynamical system over a subset $Q\subset X$. The system is defined in terms of an unknown function $f$ as in Equation \ref{eqn:dynamics1}, and the function $f$ defines an associated RKHS that is important for our analysis.  For an RKHS $H$ and a mapping $f:Q\subset X\rightarrow Q\subset X$, we define the pullback space $f^*(H)$ to be the set $\{h:Q\rightarrow \RR\ | \ h=g\circ f, \text{ for some } g\in H\}$. The pullback space $f^*(H)$ is itself an RKHS, with its kernel $\knl^*:Q\times Q\rightarrow \RR$ defined as $\knl^*(q_1,q_2)=\knl(f(q_1),f(q_2))$ for all $q_1,q_2\in Q$. \cite{saitoh2013reproducing} The norm on the pullback space that is induced by the pullback kernel $\knl^*$ is equivalent to $\|h\|_{f^*(H)}:=\inf\{ \|g\|_{H} \ | \ h=g\circ f, g\in H\}.$ For a fixed $f:Q\rightarrow Q\subset X$ the Koopman operator $U_f$ is defined as $U_fg=g\circ f$: this definition makes sense for scalar-valued functions $g\in H$ as well as vector-valued functions $G\in H^n$. We define the trace or restriction operator $Tg:=g|_Q$. The space of restrictions $\RH:=T(H)$ is itself an RKHS space. It is induced by the kernel $\rknl:Q\times Q\rightarrow \RR$ that is the restriction of the kernel $\knl$, so that $\rknl(x,y):=\knl(x,y)$ for all $x,y\in Q$. \cite{berlinet2011reproducing} 
%\subsection{Interpolation and Projection}

In this section, we review a few well-known observations regarding interpolation and projection for spaces defined in terms of scattered kernel bases. We summarize these properties for the finite-dimensional space $H_M:=\text{span}\{\knl_{\xi_{M,i}}\ | \ \xi_{M,i} \in \Xi_M, 1\leq i\leq M\}$ that is defined in terms of scattered bases in $H$ that are located at the centers $\Xi_M:=\{\xi_{M,1}, \ldots, \xi_{M,M}\}$. We let $\Pi_M: H\rightarrow H_M$ be the $H$-orthogonal projection onto the closed subspace $H_M\subset H$. Since we have assumed that the kernel $\knl$ on $H$ is strictly positive definite, we have the representation

$$
\Pi_M g=\sum_{i,j=1}^M \left ( \KK(\Xi_M,\Xi_M) \right)^{-1}_{i,j} g(\xi_{M,j}) \knl_{\xi_{M,i}}.
$$

\noindent The projection operator $\Pi_M$ induces the $H$-orthogonal decomposition $H=H_M \oplus Z_M$ where $Z_M$ is the closed subspace $Z_M:=\{ f\in H \ | \ f(\xi)=0 \text{ for all } \xi \in \Xi_M \}.$
This decomposition implies that the complementary projection $I-\Pi_M$ satisfies  $((I-\Pi_M)g)(\xi)=0$ for all $\xi\in \Xi_{M}$. In other words, the orthogonal projection $\Pi_M g$ interpolates the function $g$ at the points in $\Xi_M$. When studying the convergence properties of estimates over the set $Q\subset X$, it is common that approximation errors are described in terms of the fill distance $h_{\Xi_M,Q}$ of the finite samples $\Xi_M\subset Q$ in the set $Q$. The fill distance is defined as 
$
h_{\Xi_M,Q}=\sup_{q\in Q} \min_{\xi\in \Xi_M} d_Q(q,\xi)
$ 
with $d_Q$ the metric on $Q$. 

The relationships among the $H$-orthogonal projection $\Pi_M$, the scattered bases $\knl_{\xi_{M,i}}$ in $H_M$, and the coordinate representation of $\Pi_M$, have been stated by considering $H_M$, a subset of the RKHS $H$ induced by the kernel $\knl:X\times X\rightarrow \RR$ of functions over the ambient space $X$. However, entirely analogous statements can be made for the projections onto finite-dimensional spaces $\RH_M$ and $f^*(H_M)$, which are defined simply by defining scattered bases in terms of the translates of the kernels $\rknl_{\xi_{M,i}}$ and $\knl^*_{\xi_{M,i}}$, respectively, for $\xi_{M,i}\in \Xi_M$. To simplify notation in this paper, we use the generic operator $\Pi_M$ to refer to any of the (generally different) projections from $H,\RH,$ or $f^*(H)$ onto $H_M, \RH_M$, or $f^*(H_M)$, respectively. The specific choice of the domain and range of $\Pi_M$ is clear from the context in any particular equation that follows.

\subsection{Koopman Theory and EDMD Algorithm}
\label{sec:edmd}

We have noted earlier that the Koopman operator $U_f$ is defined from the identity $U_f g:=g\circ f$ for some fixed function $f$. 
One of the most common ways of approximating $U_f$ is based on the  extended dynamic mode decomposition (EDMD) algorithm.  It has been studied carefully in recent work like \cite{klus2016,korda2018convergence}.  Here we summarize the EDMD method using nomenclature that is common in these references so that we can distinguish and compare it from the data-driven approximation studied in this paper.  
From a set of input-output samples, typically denoted $\{(x_i,y_i\}_{i=1}^
M$, of Equation \ref{eqn:dynamics1}, we define two data matrices $\bm{X}$ and $\bm{X}^+$ defined as follows,
\begin{align*}
    \bm{X} &= [x_1, \cdots, x_{M}]\in \RR^{d\times M}, \\
 \bm{X}^+ &= [x_2, \cdots, x_{M+1}]:=f(\bm{X})\in \RR^{d\times M}.
\end{align*}
We want to use these data matrices to construct estimates of the Koopman operator $U_f$. Since the operator $U_f$ is infinite dimensional, we define a finite dimensional subspace for building approximations. Suppose we choose basis functions $\{\psi_i\ | \ 1\leq i \leq N\}$ for our approximations. 
We introduce   the vector $\bm{\psi} = [\psi_1, \dots, \psi_N]^T$. 
With the definition of the basis functions in $\bm{\psi}$, data matrices for the  EDMD algorithm are then determined as follows
\begin{align*}
    \Psi(\bm{X}) &= [\bm{\psi}(x_1), \cdots,\bm{\psi}( x_{M})] \in \RR^{N\times M}, \\
    \Psi(\bm{X}^+) &= [\bm{\psi}(x_2), \cdots, \bm{\psi}(x_{M+1})]\in \RR^{N\times M}.
\end{align*}
Approximations of the Koopman operator $U_f$ are constructed in terms of the matrix solution $A_{N,M}\in \RR^{N\times N}$ of the minimization problem
\begin{align*}
A_{N,M}&:=\underset{A\in \RR^{N\times N}}{\text{arg min}} \left \| A\Psi(\bm{X}) - \Psi(\bm{X}^+)\right \|_F^2,
%\\
%&=\underset{A\in \RR^{N\times N}}{\text{arg min}} \sum_{i=1}^M\left \|A\bm{\psi}(x_i)-\bm{\psi}(y_i)
%\right \|^2_2.
\end{align*}
with $\| \cdot \|_F$ the Frobenius norm on matrices. The solution of this minimization problem is given by $$A_{N,M}=\Psi(\bm{X}^+)
\left (  \Psi^T(\bm{X}) \Psi(\bm{X})\right )^{-1}\Psi^T(\bm{X}).$$ Finally, the approximation $U_{N,M}$ of the Koopman operator $U_f$ is given by 
\begin{align}
\left (U_{N,M} g \right )(\cdot):= \alpha^T A_{N,M} \bm{\psi}(\cdot) \label{eq:edmd_def}
\end{align}
for any function $g(\cdot):=\alpha^T \bm{\psi}(\cdot)$
with $\alpha\in \RR^N$.

\section{Problem Formulation}
The analysis in this paper exploits the relationship between approximation of Koopman operators and certain problems of statistical or machine learning. The distribution-free learning problem is a general problem that has been studied in a wide variety of contexts, most commonly for mappings between Euclidean spaces. \cite{gyorfi2006distribution}. As in \cite{powell2020learning} we are interested in this paper in formulating and solving this problem, not over some known compact subset such as $\Omega:=[a,b]^d\subset \RR^d$, but rather over the unknown manifold $Q$. 

There are several features of this problem that make it challenging in comparison to the classical problem over $\RR^d$. It is first necessary to state the learning problem precisely in terms of operators and functions defined over manifolds, but this seems to follow the form of the Euclidean case fairly closely. One complicating factor for the case at hand, however, is that samples we study in this paper are not generated from some independent and identically distributed (IID) measurement process. Rather, samples are collected along the trajectory of a discrete, typically nonlinear dynamic system over the unknown manifold, and the evolution law defining the samples is also unknown. Even if it were known, the fact that it defines a dependent process itself is a significant complication over the most common conventional case.

An additional, troublesome issue that this paper addresses is the question of how to define the function spaces defined over the unknown manifold when building approximations of the solution to the distribution-free learning problem. We want to build approximations in such a way that we obtain realizable algorithms that yield rates of convergence that can be proven in a spirit similar to the strategy in Euclidean spaces. This will mean that the regularity of $Q$, as a type of embedded manifold, will play a role in selecting the spaces for the formulation and its approximation.

\subsection{Distribution-Free Learning in Euclidean Spaces}
\label{sec:regressionRn}
We begin by summarizing the general structure of the distribution-free learning problem for regression over compact subsets of $X:=\RR^d$, and subsequently, we discuss how the problem is cast in terms of the unknown manifold $Q$ for certain types of discrete evolution laws. When the distribution-free learning problem is applied to regression, we are faced with building approximations of some unknown function over $X$ that takes values in $Y=\RR^n$. The typical case considers samples $\{(x_i,y_i)\}_{i=1}^M\subset X \times Y$. Here $X$ is the space of inputs and $Y$ is the output space. We can view the samples as perhaps noisy measurements of the functional relationship $y=G(x)$ where $G:X\rightarrow Y$ is unknown. As noted previously, it is usually assumed that the samples are independent and identically distributed, with $\nu$ the probability distribution over $X\times Y$ that generates the samples.

In the distribution-free learning problem, the probability distribution $\nu$ is unknown. The aim of learning theory is to build an approximation of the unknown function using the samples that makes some error measure small. Here we use the familiar quadratic error for arbitrary functions $g:X\rightarrow Y$, 
\begin{equation}
\label{eq:Emu}
E_\mu(g):=\int_X \int_Y \|y-g(x)\|_{Y}^2 \nu(dx,dy),
\end{equation}
where again we interpret $y=G(x)$ in the ideal, noise-free case. A great deal is known about the structure of this particular problem in the space $L^2_\mu(\RR^d,Y)$, or even in $L_\mu^2(\Omega,Y)$ with $\Omega$ a nice compact subset of $X$ like $\Omega=[a,b]^d$. It is known that the minimization of the quadratic error $E_\mu$ above is equivalent to minimizing the expression
\begin{equation}
\label{eq:Emu1}
E_\mu(g) =\int_X  \|g(x)-G_\mu(x)\|_{Y}^2 \mu(x) + E_\mu(G_\mu)
\end{equation}
where the measure $\nu$ is written as $\nu(dx,dy)=\mu_x(dy)\mu(dx)$, $\mu_x(dy)$ is the conditional probability over $Y$ given $x\in X$, $\mu(dx)$ is the marginal measure of $\nu$ over $X$, and $G_\mu(x):=\int_Y y\mu_x(dy)$ is the regressor function. Note that the term $E_\mu(G_\mu)$ does not depend on $g$. Additionally, we see that the term on the right of Equation \ref{eq:Emu1} is minimized when $g(x) = G_\mu(x)$. Thus, the regressor $G_\mu$ is the optimal minimizer over $L^2_\mu(\RR^d,Y)$. But it cannot be computed from this closed form expression since $\nu$, and therefore $\mu_x$, is unknown. In practice, then, the ideal error measure above is replaced with the discrete error

\begin{align}
\label{eq:EM}
E_M(g):=\frac{1}{M} \sum_{i=1}^M  \|y_i-g(x_i)\|_Y^2
\end{align}

\noindent that depends on the samples $\{(x_i,y_i)\}_{i=1}^M$. The method of empirical risk minimization (ERM) seeks to minimize $E_M$ instead of $E_\mu$. (Note that the minimization in Equation \ref{eq:EM} is over sequentially collected samples. This is the conventional or traditional assumption in the formulation of ERM. When discussing ERM we always use this notation, which is a special case of the setup in this paper.)  Note that the discrete error $E_M$ above can be evaluated for any given $g$. When estimates 
$
G_{M,N}={\text{arg inf}}\ \{ E_M(g) \ | \ g\in H_N\}
$ 
that minimize the empirical risk are calculated for some $N$ dimensional space $H_N$ of approximants, it is then possible to address in what sense $G_{M,N}\rightarrow G_\mu$ as the number of samples $M$ and the dimension $N$ approach infinity. The theory for such approximations is mature, and a summary of the state art in these cases can be found \cite{devore2006approximation}.

\subsection{Distribution-Free Learning over an Unknown Manifold}
\label{sec:randomIC}
The manifold estimation problem studied in this paper modifies the classical learning problem above in a number of ways: some changes are minor, while others lead to subtle issues. 
 Note carefully that the samples generated by  Equations \ref{eqn:dynamics1} are defined from a  deterministic, nonlinear, dependent measurement process, not one that is IID. 
 It turns out that the assumption that samples are IID is central to many of the precise estimates of errors in distribution-free learning theory.  We will analyze the system in Equation \ref{eqn:dynamics1} in two ways in this  and the next few sections. In this section, we assume that the samples $\{(\xi_{M,i},\upsilon_{M,i})\}_{i=1}^
M$ are generated by choosing initial conditions randomly according to the unknown measure $\mu$ on $Q$, and then observations $\upsilon_{M,i}$ are defined as in Equation \ref{eqn:dynamics2}. That is, the samples $\{(\xi_{M,i},\upsilon_{M,i})\}_{i=1}^
M$ are understood as the initial condition responses of the system in Equations \ref{eqn:dynamics1} and \ref{eqn:dynamics2} as the initial condition is chosen randomly over the unknown manifold $Q$. (Later we extend our analysis to the case when samples are dependent and determined along a trajectory.) 
In this case, the measure $\nu$ that generates the samples has the particular form $\nu(dx,dy)=\delta_{G(f(x))}(dy)\mu(dx)$, and the regressor function is 
$
G_\mu(x):=\int_Y y \delta_{G(f(x))}(dy) =G\circ f
$.
With these definitions, the ideal error $E_\mu$ in Equations \ref{eq:Emu} and \ref{eq:Emu1} makes sense for the Lebesgue space of functions $L^2_\mu(Q,Y)$ defined over the manifold $Q$ without change. Likewise, the empirical error $E_M$ in Equation \ref{eq:EM} makes sense, since the evaluation operator is well-defined on $H\subset C(X)$.  
In this situation, in contrast to the form of the regression error functional in Equation \ref{eq:Emu1}, we consider the ideal error 
\begin{align*}
    E_\mu(g)&
    %=\int_Q \|(g\circ %f)(x) -( G\circ %f)(x) \mu(dx)  + %E_\mu(G\circ f),\\ 
    %&
    = \int_Q \|(U_f(g-G))(x)\|_Y^2 \mu(dx) + E_\mu(U_f G),
\end{align*}
where $U_f h=h\circ f$ defines the Koopman operator, $U_f$ induced by the unknown function $f$ that induces the unknown dynamics. Note that this expression can be interpreted as a measure of the error in estimating the regressor $H_\mu = G \circ f$ by $U_f G$. The corresponding discrete error function is then 
\begin{align}
E_M(g)&:=\frac{1}{M} \sum_{i=1}^M \|\upsilon_{M,i}-(g\circ f)(\xi_{M,i})\|_Y^2=\frac{1}{M}\sum_{\ell =1}^n\sum_{i=1}^M {E^{(\ell)}_i(g)} = \frac{1}{M}\sum_{\ell =1}^n{E_M^{(\ell)}(g^{(\ell)})} \notag \\
%& = \frac{1}{M} \sum_{i=1}^M \|(G\circ f)(x_i)-(g\circ f)(x_i)\|_Y^2, \label{eq:risk_pull} \\
&=\frac{1}{M}  \sum_{\ell =1}^n {\sum_{i=1}^M 
\left | (G^{(\ell)}\circ f)(\xi_{M,i})-(g^{(\ell)}\circ f)(x_i) \right |^2}, 
\label{eqn:empRisk}
\end{align}
where we have introduced  the component-wise notation $g:=(g^{(1)},\ldots,g^{(n)})$ and $G:=(G^{(1)},\ldots, G^{(n)})$. 
\subsection{Projection for  Regression in the Pullback Space}
\label{sec:pullback_projection}
We start with a straightforward analysis under the assumptions of the last Section \ref{sec:randomIC}. We  interpret the empirical risk minimization in terms of projection or interpolation in a pullback space. We have chosen to start with this elementary case, since it makes clear some of the issues that distinguish the learning problem in this paper from the usual one in Euclidean space. We consider the component-wise minimization 
$$
    g_M^{(\ell)}:= \text{argmin} \{ E^{(\ell)}_M(g) \ | \ {g \in H_M} \} 
$$
with $H_M:=\text{span}\{ \knl_{\xi_{M,i}}\ | \ \xi_{M,i}\in \Xi_M\}$ 
that is associated with Equation \ref{eqn:empRisk}. 
\begin{theorem}
\label{th:proj_pullback}
Suppose that $f(\Xi_M):=\{f(\xi_{M,i})\ | \ \xi_{M,i}\in \Xi_M\}$ are distinct points. 
The empirical risk minimizer $g_M^{(\ell)}$ satisfies  
\begin{align}
    (U_f g^{(\ell)}_M)(\cdot):= \sum_{i,j}^M 
    \left ( \KK(f(\Xi_M),f(\Xi_M)) \right )^{-1}_{i,j} \upsilon_{M,i}^{(\ell)} \knl(f(\xi_{M,i}),f(\cdot)) \label{eq:erisk1}
\end{align}
\end{theorem}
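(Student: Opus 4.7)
The plan is to recognize the empirical risk minimization as a kernel interpolation problem in the pullback RKHS $f^*(H)$, and then to write down its unique solution in closed form. Since $E^{(\ell)}_M(g)=\sum_i|\upsilon^{(\ell)}_{M,i}-g(f(\xi_{M,i}))|^2=\sum_i|\upsilon^{(\ell)}_{M,i}-(U_f g)(\xi_{M,i})|^2$, the object being fit to the data is $U_f g \in f^*(H_M) \subseteq f^*(H)$, evaluated at the centers $\xi_{M,i}$. The natural setting for the problem is therefore $f^*(H)$, whose reproducing kernel is $\knl^*(p,q):=\knl(f(p),f(q))$.

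Next I would invoke the hypothesis that $f(\Xi_M)$ consists of distinct points, together with the strict positive definiteness of the ambient kernel $\knl$, to conclude that the pullback Gram matrix $[\knl^*(\xi_{M,i},\xi_{M,j})]=\KK(f(\Xi_M),f(\Xi_M))$ is symmetric positive definite and hence invertible. Standard RKHS interpolation theory in $f^*(H)$ then produces a unique minimum-norm interpolant of the data $(\xi_{M,i},\upsilon^{(\ell)}_{M,i})$, namely the linear combination of the pullback kernel sections $\knl^*_{\xi_{M,i}}(\cdot)=\knl(f(\xi_{M,i}),f(\cdot))$ whose coefficient vector is $\KK(f(\Xi_M),f(\Xi_M))^{-1}\upsilon^{(\ell)}$. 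Writing this out in coordinates reproduces the double sum on the right-hand side of the claim.

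To close the argument I would verify that the empirical risk minimizer $g_M^{(\ell)}$ actually yields this interpolant under $U_f$. Because the empirical risk is a nonnegative convex quadratic in $g$ and because the pullback interpolant already drives it to zero, any $g_M^{(\ell)}$ minimizing $E^{(\ell)}_M$ must agree with the interpolant on the data, and after composition with $f$ must produce the same element of $f^*(H)$. The expression for $U_f g_M^{(\ell)}$ in the statement follows.

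The main obstacle I expect is the bookkeeping associated with the fact that the centers of $H_M$ are $\Xi_M$, whereas the Gram matrix appearing in the formula is the one built from the images $f(\Xi_M)$. Reconciling this requires carefully using the reproducing property in the pullback RKHS $f^*(H)$ rather than in the ambient $H$, so that the correct matrix $\KK(f(\Xi_M),f(\Xi_M))$ emerges in the normal equations instead of something like $[\knl(\xi_{M,j},f(\xi_{M,i}))]$. Making precise that the minimizer in $H_M$ lifts through $U_f$ to the interpolant characterized in the pullback sense is the most delicate point of the argument.
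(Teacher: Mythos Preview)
Your proposal is correct and follows essentially the same line as the paper: reformulate the empirical risk minimization as a kernel interpolation problem in the pullback space $f^*(H)$ with kernel $\knl^*(p,q)=\knl(f(p),f(q))$, use distinctness of $f(\Xi_M)$ together with strict positive definiteness of $\knl$ to invert the pullback Gram matrix $\KK(f(\Xi_M),f(\Xi_M))$, and read off the interpolant in coordinates. The subtlety you flag---that the centers of $H_M$ are $\Xi_M$ while the Gram matrix is built from $f(\Xi_M)$---is exactly the point where the paper's own proof is terse as well; both arguments resolve it by passing from $g\in H_M$ to $U_fg\in f^*(H_M)=H^*_M$ and interpolating there.
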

\begin{proof}
The proof of this theorem is a direct consequence of scattered data interpolation over the pullback space $f^*(H)$. Define the space of approximants in the pullback space $f^*(H)$ as
$$
H^*_M:=\text{span} \left \{ \knl^*_{\xi_{M,i}}\ | \ \xi_{M,i}\in \Xi_M\right \}\subset f^*(H). 
$$ 
We can consider another minimization problem over the pullback space $f^*(H)$, 
\begin{align}
\label{eq:hk1}
h^{(\ell)}_M = \underset{h\in H^*_M \subset f^*(H)}{\text{argmin}}\ {\frac{1}{M} \sum_{i=1}^M |(\upsilon_{M,i}^{(\ell)}-h)(\xi_{M,i})|^2}, 
\end{align}
But Equation \ref{eq:hk1} is solved by the function $h^{(\ell)}_M\in f^*(H)$ that interpolates the samples $\{(\xi_{M,i},\upsilon_{M,i}^{(\ell)})\}_{i=1}^{M}$, since in this case the right hand side of Equation \ref{eq:hk1} is precisely zero. The solution of this intepolation problem is given by $$h_M^{(\ell)}(\cdot) = \sum_{i,j} [\knl^*(\xi_{M,i},\xi_{M,j})]^{-1}\upsilon_{M,i}^{(\ell)} \knl^*(\xi_{M,i},\cdot) $$ where $\knl^*$ is the pullback kernel on the pullback space $f^*(H)$. The right hand side of Equation (4) is precisely the coordinate expression for the  $f^*(H)$-orthogonal projection onto $H^*_M:=f^*(H_M)$. Since the kernel $\knl$ is assumed to be positive definite over $X\times X$, it follows that the pullback kernel $\knl^*(\cdot,\cdot)=\knl(f(\cdot),f(\cdot))$ is also strictly positive definite over $Q\times Q$. The assumption that $f(\Xi)$ are distinct points ensures that the Grammian in Equation \ref{eq:erisk1} is invertible. From these observations it follows that $$g^{(\ell)}_M(\cdot):= \sum_{i,j}^M 
    \left ( \KK(f(\Xi_M),f(\Xi_M)) \right )^{-1}_{i,j} \upsilon_{M,i}^{(\ell)} \knl(f(\xi_{M,i}),\cdot)$$  is a minimizer of $E^{(\ell)}_M$ over $H_M$.
\end{proof}

Overall, Theorem \ref{th:proj_pullback} is quite informative. It shows that the distribution-free learning problem at hand, which is expressed in terms of an unknown manifold $Q$, measure $\mu$ on $Q$, and function $f:Q\rightarrow Q$, can in principle be solved as a simple regression problem on the pullback space $f^*(H)$. Unfortunately, the solution in Equation \ref{eq:erisk1} is not of practical use in the problem at hand, since the solution depends on the unknown function $f$. This makes clear a point that distinguishes the learning problem on manifolds that we study here, where the dynamics propagates according to some unknown function $f$, from the classical regression problem. 

\subsection{Data-Driven Approximations} 
\label{sec:data_driven}
Finally, we study the uncertain system in Equations \ref{eqn:dynamics1} and \ref{eqn:dynamics2} in its full generality. The samples $\{(\xi_{M,i},\upsilon_{M,i})\}_{i=1}^M$ are collected along the deterministic, nonlinear system at discrete times $\TT_M:=\{k_i\ | \ 1\leq i\leq M\}$. Now the measure $\mu$ is taken to be some convenient probability measure over $Q$ for defining average error, but it does not generate the samples randomly. 
In this section we define a data-driven approximation $U^M_f$ of the Koopman operator, which is used to bound the excess risk, which is defined by the left hand side of the inequality
\begin{align*}
   &E_\mu(g)-E_\mu(U_fG)= \int_Q \|(U_f(G-g))(x)\|_Y^2 \mu(dx), \\
   %&\hspace*{.25in}  \leq \int_Q \sup_{q\in Q} \|(U_f(G-g))(Q)\|_Y^2 \mu(dx),
    %=\|U_f(G-g)\|^2_{C(Q,Y)}
   % \\
    &\hspace*{.25in} \lesssim \left (
    \max_{1\leq \ell \leq n} \|U_f(G^{(\ell)})-U_f(g^{(\ell)})\|_{C(Q,\RR)}
    \right )^2.
\end{align*}
We make the structural assumption that the pullback space $f^*(H)$ of functions over the manifold $Q$ is continuously embedded in the space of restrictions $\RH:=T(H)$, $f^*(H)\hookrightarrow \RH$, and furthermore, that the space of restrictions $\RR$  is embedded continuously in the Sobolev space $W^{s,2}(Q)$.  
Following the strategy in \cite{paruchuri2020koopman}, we define the data dependent approximation $U^M_fg$ of $U_fg$ to be given by 
\begin{align*}
    U^M_fg:=\Pi_M( (\Pi_M g)\circ f),
\end{align*}
where this equation is interpreted componentwise for the vector-valued function $g$. That is, 
$
U^M_f g:=( U^M_f g^{(1)}, \ldots , U^M_f g^{(n)} )
$
for $g=(g^{(1)},\ldots, g^{(n)})$. 
Based on the discussion in Section \ref{sec:kernelspaces}, the coordinate expression for each component of this operator is 
\begin{align}
    &U^M_fg^{(\ell)}:=\sum_{i,j=1}^M
    \left ( \KK(\Xi_M,\Xi_M)\right)^{-1}_{i,j} \notag \\
    &
    \times \sum_{p,q=1}^M \left ( \KK(\Xi_M,\Xi_M)\right )_{p,q}^{-1} g^{(\ell)}(\xi_{M,q}) \KK(\Xi_M,f(\Xi_M))_{p,j} \knl_{\xi_{M,i}}. \label{eq:UMf_coord}
\end{align}
\begin{theorem}
\label{th:approx_koopman} 
Suppose that $f^*(H)\hookrightarrow R \approx W^{s,2}(Q)$. Then there are constants $C,H>0$ such that for all $\Xi_M\subset Q$ that satisfy $h_{\Xi_M,Q}\leq H$, we have 
$$
\|(U_fG)(x)-(U_f^M G)(x)\|_Y \leq C h^{s-m}_{\Xi_M,Q} \|G\|_{W^{s,2}(Q)}
$$
for all $G\in W^{s,2}(Q)$ and $x\in Q$. 
\end{theorem}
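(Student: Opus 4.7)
The plan is to decompose
\begin{align*}
U_f G - U_f^M G = (I - \Pi_M)(G \circ f) + \Pi_M\bigl(((I - \Pi_M) G) \circ f\bigr)
\end{align*}
by adding and subtracting $\Pi_M(G \circ f)$, and then bound each piece with a single workhorse: the standard direct theorem of scattered kernel interpolation on compact Riemannian manifolds, which under the assumption $\mathcal{R} \approx W^{s,2}(Q)$ yields constants $C,H>0$ such that
\begin{align*}
\|(I - \Pi_M)\phi\|_{L^\infty(Q)} \leq C\, h_{\Xi_M,Q}^{\,s-m} \|\phi\|_{W^{s,2}(Q)}
\end{align*}
for all $\phi \in W^{s,2}(Q)$ whenever $h_{\Xi_M,Q}\leq H$. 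Here $m$ plays the role of $\dim(Q)/2$ and accounts for the Sobolev embedding $W^{s,2}(Q) \hookrightarrow C(Q)$; this inequality is standard in the references \cite{fuselier,hangelbroek2012polyharmonic}. Since the $Y$-valued conclusion follows from the scalar one by summing the $n$ components, I work with $g := G^{(\ell)}$.

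For the first term $(I-\Pi_M)(g\circ f)$, I must promote $g\circ f$ into $W^{s,2}(Q)$ with norm controlled by $\|g\|_{W^{s,2}(Q)}$. Using $\mathcal{R}\approx W^{s,2}(Q)$, pick $h\in H$ with $Th=g$ and $\|h\|_H \approx \|g\|_{W^{s,2}(Q)}$; then $g\circ f = h\circ f \in f^*(H)$ with $\|g\circ f\|_{f^*(H)}\leq \|h\|_H$ by the infimum definition of the pullback norm, and the hypothesis $f^*(H)\hookrightarrow \mathcal{R}\approx W^{s,2}(Q)$ chains together to give $\|g\circ f\|_{W^{s,2}(Q)} \lesssim \|g\|_{W^{s,2}(Q)}$. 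One application of the direct estimate above then yields the desired bound of order $h_{\Xi_M,Q}^{\,s-m}\|g\|_{W^{s,2}(Q)}$.

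For the second term, set $\epsilon := (I-\Pi_M) g$ and apply the triangle inequality
\begin{align*}
\|\Pi_M(\epsilon\circ f)\|_{L^\infty(Q)} \leq \|\epsilon\circ f\|_{L^\infty(Q)} + \|(I - \Pi_M)(\epsilon\circ f)\|_{L^\infty(Q)}.
\end{align*}
The first summand is $\leq \|\epsilon\|_{L^\infty(Q)} = \|(I-\Pi_M)g\|_{L^\infty(Q)} \lesssim h_{\Xi_M,Q}^{\,s-m}\|g\|_{W^{s,2}(Q)}$ by applying the direct estimate to $g$ itself. For the second summand, I repeat the promotion argument from the previous paragraph with $\epsilon$ in place of $g$; contractivity of $\Pi_M$ as an $\mathcal{R}$-orthogonal projection gives $\|\epsilon\|_{\mathcal{R}} \leq \|g\|_{\mathcal{R}}$ and hence $\|\epsilon\circ f\|_{W^{s,2}(Q)} \lesssim \|g\|_{W^{s,2}(Q)}$, and one further application of the direct estimate finishes the bound.

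The main conceptual obstacle is the promotion step: the composition inequality $\|\phi\circ f\|_{W^{s,2}(Q)} \lesssim \|\phi\|_{W^{s,2}(Q)}$ is not automatic for general $f$ and $\phi$, and it is precisely at this point that the regularity hypothesis $f^*(H)\hookrightarrow \mathcal{R}$ earns its keep. A secondary bookkeeping issue is that the symbol $\Pi_M$ in $U_f^M g := \Pi_M((\Pi_M g)\circ f)$ plays three possible roles (projections from $H$, $\mathcal{R}$, or $f^*(H)$ onto their respective $M$-dimensional scattered subspaces), but all three reduce to the same scattered interpolant at the nodes $\Xi_M$ via Equation \ref{eq:UMf_coord}, so the uses are consistent and all estimates above go through under any single interpretation.
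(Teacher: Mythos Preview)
Your proof is correct but uses a different decomposition from the paper's. The paper adds and subtracts $(\Pi_M g)\circ f$, yielding
\[
g\circ f - \Pi_M\bigl((\Pi_M g)\circ f\bigr) = \bigl((I-\Pi_M)g\bigr)\circ f + (I-\Pi_M)\bigl((\Pi_M g)\circ f\bigr).
\]
Both pieces already carry an outer $(I-\Pi_M)$ (the first after the trivial estimate $\|((I-\Pi_M)g)\circ f\|_{C(Q)}\le \|(I-\Pi_M)g\|_{C(Q)}$ since $f:Q\to Q$), so the many-zeros theorem applies directly to each and only two invocations are needed: once on $g$, once on $(\Pi_M g)\circ f$ with the pullback embedding supplying the $W^{s,2}$ control. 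Your decomposition instead adds and subtracts $\Pi_M(g\circ f)$; the first term $(I-\Pi_M)(g\circ f)$ is handled just as cleanly, but the second term $\Pi_M\bigl(((I-\Pi_M)g)\circ f\bigr)$ has no outer $(I-\Pi_M)$, which forces the extra triangle-inequality split and a third application of the direct estimate. The paper's route is a touch more economical; yours is equally valid and arguably foregrounds the role of the hypothesis $f^*(H)\hookrightarrow\mathcal{R}$ more clearly, since you invoke it already on the leading term $g\circ f$ rather than only on the correction $(\Pi_M g)\circ f$.
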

\begin{proof}
We define the approximation $U_f^M G$ as above, $U_f^MG=\Pi_M( (\Pi_M G)\circ f)$. Note that the coordinate expression for $U_f^MG$ can be calculated using the observations $\{(\xi_{M,i},\upsilon_{M,i})\}_{i=1}^M=\{(\xi_{M,i},G(f(\xi_{M,i}))\}_{i=1}^M$. (Here we assume that we have collected the one additional measurement $\xi_{M,M+1}=f(\xi_{M,M})$ that is also needed in the coordinate expression.) The proof of this theorem follows that strategy in \cite{paruchuri2020koopman}, generalizing it to the case when $Y=\RR^n$, where 
\begin{align*}
   &  \|(U_fG)(x)-(U^M_f G)(x)\|_Y \\
    & \hspace*{.1in} \lesssim \max_{1\leq \ell \leq n}|(G^{(\ell)}\circ f)(x)-(\Pi_M((\Pi_M G^{(\ell)})\circ f)(x)|, \\
    &\hspace*{.1in} \leq \max_{1\leq \ell \leq n}
    \biggl \{ 
    \underbrace{\|(I-\Pi_M)G^{(\ell)}\|_{C(Q,\RR)}}_{\text{term 1}} \\
     & \hspace*{.5in} + \underbrace{\|(I-\Pi_M)((\Pi_MG^{(\ell)})\circ f)\|_{C(Q,\RR)}}_{\text{term 2}}
    \biggr \}.
\end{align*}
Both term 1 and term 2 above are bounded using the many zeros Theorem \ref{th:manyz} on the manifold $Q$. Term 1 is bounded using the fact that $\RH \approx W^{s,2}(Q)\hookrightarrow W^{m,2}(Q)$ and subsequently  
\begin{align}
    \|(I-\Pi_M)G^{(\ell)}\|_{C(Q,\RR)} & \lesssim \|(I-\Pi_M)G^{(\ell)}\|_{W^{m,2}(Q)}, \notag \\ 
    & \lesssim  h^{s-m}_{\Xi_M,Q}  \|(I-\Pi_M)G^{(\ell)}\|_{W^{s,2}(Q)},\notag \\
    & \lesssim h^{s-m}_{\Xi_M,Q} \|G^{(\ell)}\|_{W^{s,2}(Q)}. \notag
\end{align}
In the above we have applied Theorem \ref{th:manyz} to the term $(I-\Pi_M)G^{(\ell)}$, which vanishes on $\Xi_M$. We also use that fact that $\Pi_M$ is a bounded operator on $\RH=W^{s,2}(Q)$. We can similarly bound the second term:   $\text{(term 2)} \lesssim h^{s-m}_{\Xi_M,Q}\|U_f\Pi_M G^{(\ell)}\|_{W^{s,2}(Q)}$. By virtue of the assumption that $f^*(H)\hookrightarrow \RH\approx W^{s,2}(Q)$, we know $(\text{term 2}) \lesssim  h^{s-m}_{\Xi_M,Q} \|G^{(\ell)}\|_{W^{s,2}(Q)}$ also. Collecting the different inequalities for each of the components $1\leq \ell \leq n$ finishes the proof. 
\end{proof}

We close this section by showing that the analysis of this section enables similar error bounds to be defined for certain cases of the EDMD algorithm.

\begin{theorem}
\label{th:edmd}
Suppose that $U_{M,M}$ is the approximation of the Koopman operator derived by the EDMD algorithm in Equation \ref{eq:edmd_def} using the samples
$\{(\xi_{M,i},\upsilon_{M,i})\}_{i=1}^M$,  so that  the basis vector $\bm{\psi}$ is selected as $\bm{\psi}:=\{ \knl_{\xi_{M,1}},\ldots, \knl_{\xi_{M,M}} \}^T$.  Under the hypotheses of Theorem \ref{th:approx_koopman}, we have 
$$
\|(U_fG)(x)-(U_{M,M} G)(x)\|_Y \leq C h^{s-m}_{\Xi_M,Q} \|g\|_{W^{s,2}(Q)}
$$
for the EDMD approximation $U_{M,M}$ of the Koopman operator $U_f$. 
\end{theorem}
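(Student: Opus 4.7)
The plan is to argue that, for the particular choice of basis $\bm{\psi}=(\knl_{\xi_{M,1}},\ldots,\knl_{\xi_{M,M}})^T$, the EDMD approximation $U_{M,M}$ coincides with the data-driven Koopman approximation $U_f^M g := \Pi_M((\Pi_M g)\circ f)$ analyzed in Theorem \ref{th:approx_koopman}. Once this identification is established, the bound of Theorem \ref{th:edmd} follows at once from Theorem \ref{th:approx_koopman}, since the right-hand side makes no distinction between the two constructions.

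First I would specialize the EDMD data matrices. Because the basis entries are kernel sections centered at the samples, $\Psi(\bm{X})_{ij}=\knl_{\xi_{M,i}}(\xi_{M,j})=\knl(\xi_{M,i},\xi_{M,j})$, so $\Psi(\bm{X})=\KK(\Xi_M,\Xi_M)$, which is symmetric and invertible by the strict positive-definiteness of $\knl$. Similarly, $\Psi(\bm{X}^+)_{ij}=\knl(\xi_{M,i},f(\xi_{M,j}))=\KK(\Xi_M,f(\Xi_M))_{ij}$. With $N=M$ and $\Psi(\bm{X})$ invertible, the normal-equations formula collapses to $A_{M,M}=\Psi(\bm{X}^+)\Psi(\bm{X})^{-1}=\KK(\Xi_M,f(\Xi_M))\,\KK(\Xi_M,\Xi_M)^{-1}$.

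Next I would reconcile the domains. The EDMD formula $(U_{M,M}g)(\cdot)=\alpha^T A_{M,M}\bm{\psi}(\cdot)$ is defined only for $g=\alpha^T\bm{\psi}\in H_M$; for a general $G$ the natural and standard interpretation is to first interpolate $G$ at $\Xi_M$, i.e.\ replace $G$ by $\Pi_M G$, whose coordinate vector is $\alpha=\KK(\Xi_M,\Xi_M)^{-1} G(\Xi_M)$ where $G(\Xi_M):=(G(\xi_{M,1}),\ldots,G(\xi_{M,M}))^T$. Substituting this $\alpha$ and writing $K:=\KK(\Xi_M,\Xi_M)$ and $L:=\KK(\Xi_M,f(\Xi_M))$ yields
\begin{equation*}
(U_{M,M}G)(\cdot)=G(\Xi_M)^T K^{-1} L\, K^{-1}\bm{\psi}(\cdot).
\end{equation*}
A parallel coordinate computation for $U_f^M G=\Pi_M((\Pi_M G)\circ f)$, using the representation of $\Pi_M$ from Section \ref{sec:kernelspaces} twice and the identity $(\Pi_M G)(f(\xi_{M,j}))=[L^T K^{-1} G(\Xi_M)]_j$, shows that $U_f^M G$ has the identical coordinate vector $K^{-1}L^T K^{-1}G(\Xi_M)$ in the basis $\bm{\psi}$ (using symmetry of $K$). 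Thus $U_{M,M}G=U_f^MG$ pointwise on $Q$, which is the crux of the proof; the rest is purely a citation of Theorem \ref{th:approx_koopman} applied to $U_f^M G$ componentwise, which yields the claimed $O(h_{\Xi_M,Q}^{s-m})$ bound.

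The only delicate point I anticipate is precisely the interpretation step: ensuring that ``$U_{M,M}G$'' for $G\notin H_M$ is understood as the composition of the EDMD operator with the interpolatory projection $\Pi_M$, which is the canonical reading in the RKHS implementation of EDMD. Once this is fixed, the matching of the two coordinate expressions is mechanical and no new approximation-theoretic machinery beyond Theorem \ref{th:approx_koopman} is needed.
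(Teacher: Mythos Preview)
Your proposal is correct and follows exactly the paper's own route: show that with $N=M$ and $\bm{\psi}=(\knl_{\xi_{M,1}},\ldots,\knl_{\xi_{M,M}})^T$ the EDMD coordinate expression coincides with that of $U_f^M$ in Equation~\ref{eq:UMf_coord}, so $U_{M,M}=U_f^M$ and Theorem~\ref{th:approx_koopman} applies directly. Your explicit matrix computation is more detailed than the paper's one-line assertion, but the argument is identical.
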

\begin{proof}
In general, the EDMD approximation of the Koopman operator depends on two parameters, the number of samples $M$ and the number of basis functions $N$. It is a more general method of building an approximation of $U_f$ than $U_f^M$ in this paper. However, when $M=N$ and the basis vector $\bm{\psi}$ is chosen as $\bm{\psi}:=\{ \knl_{\xi_1},\ldots, \knl_{\xi_M} \}^T$, then the coordinate expression in Equation \ref{eq:UMf_coord} for $U_f^M$ is precisely the same as that in Equation \ref{eq:edmd_def}. Since $U^M_f=U_{M,M}$, the error bound follows. 
\end{proof}
We close this section with the description of the  algorithm that generates an estimate that accords with the  error bounds  described above. Note that the discrete times $k_i \in \TT_M$ must be updated and stored in order to determine the selected outputs $\{y_{k_i}\}_{i=1}^M$ used in the estimate calculation. This list is updated adaptively based on two criteria: maintaining some minimal separation between points and generating samples that effectively cover the underlying manifold or subset of interest.  Minimal separation is controlled by the parameter $\eta$, which is needed to ensure numerical stability of the kernel matrix $K(f(\Xi_M),f(\Xi_M))$. It is well-known that a lower bound on the smallest eigenvalue of the kernel matrix can often be related to the minimal separation of the points under consideration. \cite{wendland2004scattered} Our numerical experiments have established that this is a critical feature of the estimation problem, one that cannot be overlooked.  Because of page limitations, we leave reporting of this qualitative aspect of the conditioning to a full, journal version of this paper to follow.  This algorithm is based on the assumption that the sequential input/output samples $\{(x_i,y_i)\}_{i\in\NN}$  generated by the governing Equations \ref{eqn:dynamics1} and \ref{eqn:dynamics2}, and measured by the agent, are available on a time scale that is much finer than that used for the generation of approximations. Estimation and approximation is carried out on a coarser time scale that is based on subselecting centers and outputs that satisfy separation and coverage criteria in the algorithm. 
\begin{algorithm}
\caption{\centering Data-Dependent Approximation of the Koopman Operator  Over an Orbit}
\label{alg: cent est}
\begin{algorithmic}
\STATE{\textbf{input}: Candidate sample library $\{x_i,y^{(\ell)}_i\}_{i=1}^m$, for component $\ell$, $1 \leq \ell \leq n$ of the output}
\STATE{\textbf{initialization}: $\Xi_M = \xi_{M,1} = x_1$, $M=1$}
\STATE{Subselect kernel centers}
\FOR{$x_i$ in $\{x_i\}_{i=2}^m$}
\STATE{if $\|\xi_{M,j}-x_i\| > \eta, \quad \forall \xi_{M,j} \in \Xi_M $}
\STATE{Add sample $x_i$ to the set of centers}
\STATE{$\Xi_{M+1} = \Xi_{M}\cup \{x_i\}$}
\STATE{Add sample index $i$ to the set of indexed times}
\STATE{$k_i = i$, $\TT_{M+1} = \TT_{M}\cup \{k_i\}$, $M = M+1$}
\ENDFOR
\STATE{Construct Kernel Matrix}
\STATE{$\KK(f(\Xi_M),f(\Xi_M)_{i,j} = [\mathfrak{K}(f(\xi_{M,i}),f(\xi_{M,j}))]$}
\STATE{Calculate coefficient $\{\alpha_j\}_{j=1}^M$ of estimate}
\STATE{$\alpha_i = \sum_{i,j}^M 
    \left ( \KK(f(\Xi_M),f(\Xi_M)) \right )^{-1}_{i,j} y^{(\ell)}_{k_j}$ }
\RETURN Estimate $U^M_F G^{(\ell)}(\cdot) = \sum_{i=1}^M \alpha_i \mathfrak{K}_{M,i}(\cdot)$ .
\end{algorithmic}
\end{algorithm}

\subsection{Sobolev Spaces by Restriction}
The approach in this paper relies on defining a reasonable kernel over the manifold $Q$ that is not known {\em a priori}. Before proceeding to consider specific exmaples and numerical studies, we summarize a few issues that must be considered in choosing a kernel. There are a wide range kernel functions, commonly referred to as radial basis functions (RBFs), the depend only on the distance between points on the underlying set. Many of these RBFs are defined over subsets of $\RR^d$ and depend on the Euclidean distance between points, see \cite{wendland2004scattered} for a summary. However, some kernels are defined over smooth manifolds using intrinsic information about the manifold, such as its intrinsic distance, its Riemannian metric, or its coordinate charts. One standard example to define the kernel in these cases uses eigenfunctions of the Laplace-Beltrami operator over the manifold and Mercer's Theorem to define the kernel.  We refer to these as intrinsic methods that define the kernel. Another intrinsic approach uses coordinate chars as in \cite{wendland2004scattered} Chapter 17. While the latter intrinsic methods to define the kernel on $Q$ would provide a path to approximations if $Q$ were known, they seem less well-suited to attack the problem when $Q$ itself is unknown.

In an extrinsic method a kernel over the embedded manifold is obtained by restricting the kernel defined on the ambient Euclidean space to the submanifold. As mentioned previously, we define a restricted kernel $\rknl : Q \times Q \to R$ from our embedded manifold $Q\subset \RR^d$, where
$\rknl(\cdot , \cdot) := \knl(\cdot , \cdot)|_{Q\times Q}.$
If $\knl$ is positive definite, so is its restriction $\rknl$ to $Q$, making $\rknl$ well-suited for approximation
problems over the submanifold. This philosophy is studied in detail in \cite{fuselier}. The main benefit of such an approach is that we do not necessarily need to know the submanifold $Q$, but rather just that our inputs to our kernel actually come from this restriction rather than the full ambient space. Another benefit of this approach is that there is a large collection of candidate kernels defined over Euclidean space from which to choose.

Even though extrinsic methods can be attractive, they do have their drawbacks. There are some well-known trade-offs from this approach primarily related to the loss of smoothness in approximating functions. We note that, for any given $k$-dimensional submanifold $Q \subset \RR^d$, the trace operator $\mathcal{T}_Q f = f|_Q$, which restricts functions to a $k$-dimensional manifold $Q$ actually extends to a continuous, and therefore bounded, linear operator $\mathcal{T}_Q: W^{s,2}(Q) \to W^{m,2}(Q)$ with $m \leq s-(d-k)/2$. Essentially, this means that restricting functions to the submanifold reduces smoothness by $1/2(d-k)$ for the reduction in dimension $(d-k)$  \cite{fuselier}. Note, that the rates of convergence of the estimates from Theorems \ref{th:approx_koopman} and \ref{th:edmd} are bounded by the fill distance raised to the power $s$ which depends on the smoothness of the space. This reduction in smoothness, therefore, also decreases the rate at which the estimate converges.

\section{Examples and Numerical Results}
This section summarizes numerical results for some examples of estimates given a family of observations $\{(\xi_{M,i},\upsilon_{M,i})\}_{i=1}^M$ collected over an unknown manifold, $Q$. We set the centers $\Xi_M:=\{\xi_{M,1},\ldots, \xi_{M.M}\}$ in this section. 
\subsection{Example: The Pendulum}
For our first example, we consider a simple case that can be expressed in closed form to illustrate the concepts behind the empirical estimate. The evolution corresponds to the dynamics of a pendulum with mass $m=1$ hanging from a massless rod of length $l=1$ that is approximated using the St\"ormer-Verlet discrete integration scheme as given in Example 1.4 in \cite{hairer2006geometric}. In this example, $x^{(1)}$ and $x^{(2)}$ correspond to the pendulum's momentum and position respectively. The equations of motion are given as 
\begin{align}
    \dot{x}^{(1)} &= -sin(x^{(2)}) = f_1(x^{(2)}) \\
    \dot{x}^{(2)} &= x^{(1)}
\end{align}

The discrete evolution is determined for the pendulum's position $x^{(2)}_i$, as well as approximations $x^{(1)}_i$ and $x^{(1)}_{i+1/2}$ of the derivative $x^{(1)}=\dot{x}^{(2)}$ at the $i^{th}$ and $i+1/2$ step. The approximations are calculated by
\begin{equation}
    x^{(1)}_i = \frac{x^{(2)}_{i+1}-x^{(2)}_{i-1}}{2h}
    \label{eqn:momentumApprox1}
\end{equation}
\begin{equation}
    x^{(1)}_{i+1/2} = \frac{x^{(2)}_{i+1}-x^{(2)}_{i}}{h}
    \label{eqn:momentumApprox2}
\end{equation}
where $h$ is the time step. Note that in this formulation $\ddot{x}^{(2)} = f_1(x^{(2)})$ which can be approximated by a second-order difference quotient
\begin{equation}
    x^{(2)}_{i+1}-2x^{(2)}_i+x^{(2)}_{i-1} = h^2f_1(x^{(1)}_i)
    \label{eqn:secondDiff}
\end{equation}
Using the positions $x^{(2)}_i$ and the momentum $x^{(1)}_i$ along with the second derivative $f_1(x^{(1)}_i)$ and Equations \ref{eqn:momentumApprox1}, \ref{eqn:momentumApprox2} and \ref{eqn:secondDiff}, we can determine a one step evolution $f: Q \to Q$ to get  $(x^{(1)}_{i+1},x^{(2)}_{i+1})$ defined by the following recursion
\begin{align}
    x^{(1)}_{i+1/2} &= x^{(1)}_i+\frac{h}{2}f_1(x^{(2)}_i)  \label{eqn:recursion1}\\
    x^{(2)}_{i+1} &= x^{(2)}_{i} +hx^{(1)}_{i+1/2}\\
    x^{(1)}_{i+1} &= x^{(1)}_{i+1/2}+ \frac{h}{2}f_1(x^{(2)}_{i+1}).
    \label{eqn:recursion3}
\end{align}
To generate the samples, the recursion is first taken over 256 steps, but the actual centers $\Xi_M$ were chosen at iterations $\{k_i\}_{i=1}^M$ such that there was sufficient separation between the centers. For this example, a total of $37$ centers were used. For a set of centers $\Xi_M$, the separation $$s_{\Xi_M} = \frac{1}{2} \min_{\xi_{M,i},\xi_{M,j}}d_Q(\xi_{M,i},\xi_{M,j})$$ corresponds to the largest possible radius of disjoint balls centered around the samples $\Xi_M$. As will be seen shortly, the separation will be key to numerical stability.

We build estimates for an unknown observable function $G:X:=\RR^2 \to \RR$ from samples collected over $Q  \subset \RR^2$. The function $G$, and therefore the outputs $y_i$, are defined for each $x_i := (x_i^{(1)}, x^{(2)}_i) \in \RR^2$ by 
\begin{equation}
    G(x_i)= y_i = 1.5-\text{sin}(x_i^{(2)})+\frac{1}{9}(x_i^{(1)})^2. \label{eqn:unknownFunc}
\end{equation}
 Figure \ref{fig:simulatedDataEstimate} depicts the estimates of the Koopman operator acting on  the  observable function $G$. In these studies, we use the Matern-Sobolev kernel, 
\begin{equation*}
    \knl^{MS}(\xi_{M,j},x) = \knl^{MS}_{\xi_{M,j}}(x) = \bigg(1+ \frac{\sqrt{3}\|x-\xi_{M,j}\|_2^2}{\beta }\bigg)\text{exp}{\bigg(-\frac{\sqrt{3}\|x-\
    \xi_{M,j}\|_2^2}{\beta }\bigg)}
\end{equation*}
with $\| \cdot \|_2$ the standard Euclidean norm over $\RR^2$, $\xi_{M,j}$ corresponds to the $jth$ kernel center, and the hyperparameter $\beta$ measures the decay of the kernel function away from the center. From the figure, it is evident that the surface, $U_f^M G$ generated from the RKHS basis intersects the true function surface $U_fG$ along the manifold $Q$ where the sample inputs are collected. This is a result of the interpolating properties of the RKHS approximation. 
\begin{figure}[htp]
    \centering
     \includegraphics[width = \textwidth]{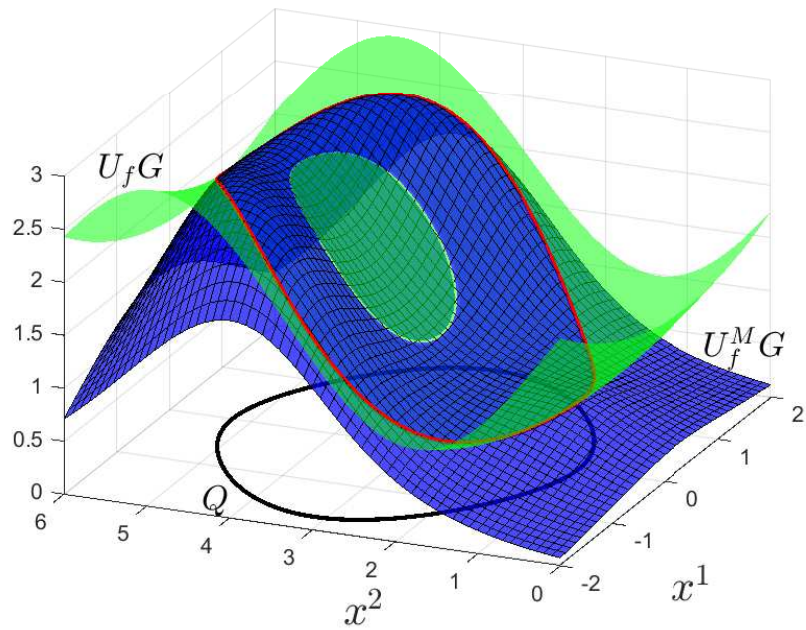}
    \caption{An example of the $U_f G$ approximation (in red) of $G$, represented by the light green surface.  Using $M = 37$ samples collected over the composition $U_f G$ defined by Equations \ref{eqn:recursion1} through \ref{eqn:recursion3}  and the unknown function \ref{eqn:unknownFunc}, the finite dimensional approximation $U_f^M G$ is generated from the RKHS represented by the mesh blue surface. Minimizing the empirical risk over the samples corresponds to interpolating the function over the samples as represented by the red dashed line.}
    \label{fig:simulatedDataEstimate}
\end{figure}

Approximations are also generated from compactly supported kernel functions as  defined in \cite{wendland2004scattered}. Some examples of these types of kernels include

\begin{align*}
    \knl^{CS}_1(\mathfrak{d})  = & (1-\mathfrak{d})_{+}^4(4\mathfrak{d}+1) \\
    \knl^{CS}_2(\mathfrak{d})  = &  (1-\mathfrak{d})_{+}^6(35\mathfrak{d}^2+18\mathfrak{d}+3) \\
     \knl^{CS}_3(\mathfrak{d}) =  & (1-\mathfrak{d})_{+}^8(32\mathfrak{d}^3+ 25\mathfrak{d}^2+8\mathfrak{d}+1) 
\end{align*}
where the radial distance $\mathfrak{d} = \|x-\xi_{M,j}\|_2$ is defined from the kernel center $\xi_{M,j}$, and the operator $(\cdot)_{+}$ returns the argument in the parenthesis if it is positive and zero otherwise. The positive-semidefinite property of the  operator $(\cdot)_{+}$ ensures each kernel is compactly supported. Figure \ref{fig:approx4DiffKernels} compares the estimate using the Matern-Sobolev kernel with $\beta = 1$ to the estimate using the compactly supported kernel $\knl^{CS}_2$. 

Roughly speaking, the performance of the EDMD algorithm in Figures \ref{fig:simulatedDataEstimate} and \ref{fig:approx4DiffKernels} can be assessed by considering two convergence regimes. Over the manifold $Q$, excellent convergence is attained as guaranteed in Theorem \ref{th:edmd}. For locations outside the manifold, we can interpret Figures \ref{fig:simulatedDataEstimate} and \ref{fig:approx4DiffKernels} as illustrating the accuracy of ``out of sample" predictions depending on the regularity of the unknown function $G$ and the hypothesis space. Out of samples predictions can have substantial error (that decreases as we approach the manifold). Notice how both estimates have substantial error away from the manifold with the estimate generated via compactly supported kernels quickly decaying to zero away from the manifold as a consequence of its compactly supported basis.

\begin{figure}[htp]
\begin{subfigure}{\textwidth}
 \centering
     \includegraphics[width = \textwidth,trim={0 0.1cm 0 1cm},clip]{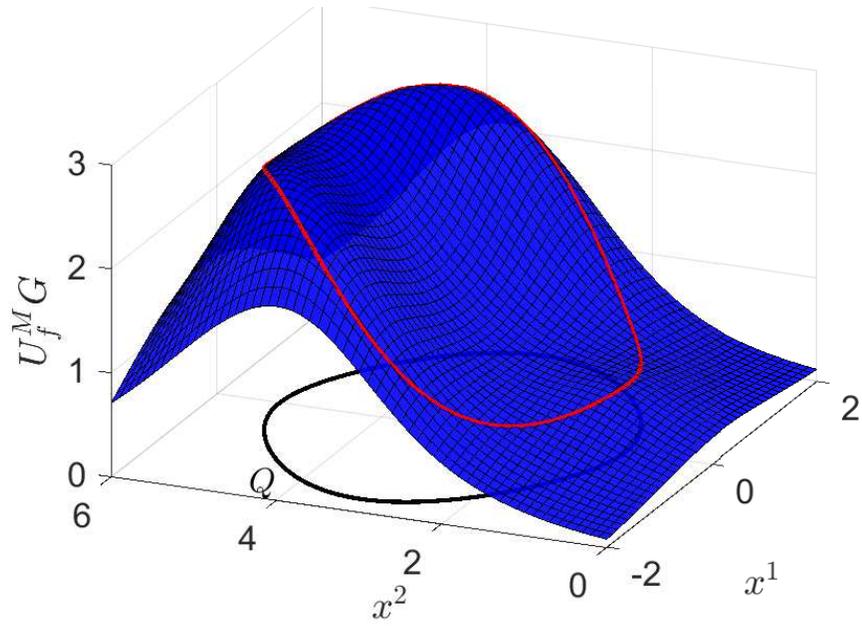}
    \caption*{(a)}
    \label{fig:kinematicEstimates}
\end{subfigure}
\begin{subfigure}{\textwidth}
\centering
\includegraphics[width = \textwidth,trim={0 0.1cm 0 1cm},clip]{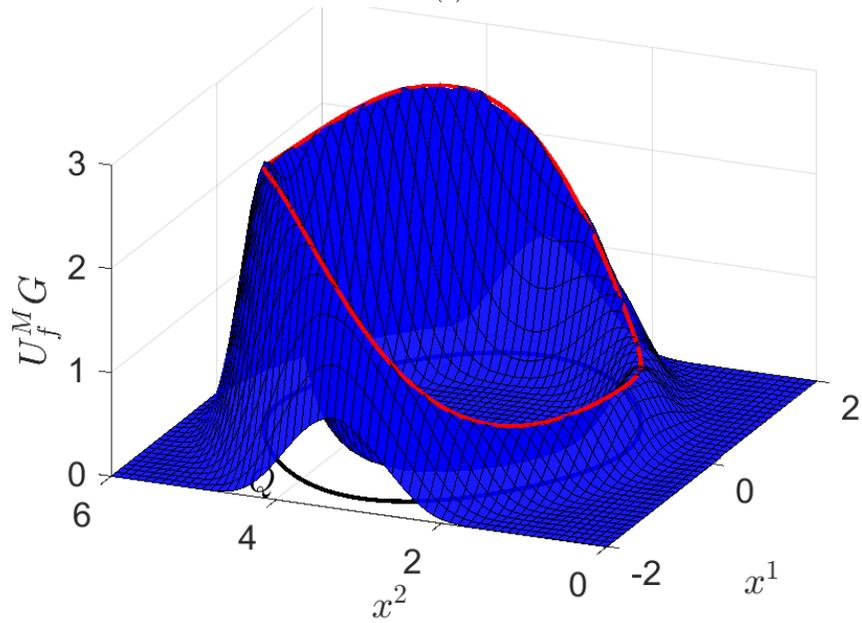}\caption*{ (b)}
\end{subfigure}
\caption{The estimate $U^M_f G$ of the composition $U_fG$ defined by Equations \ref{eqn:recursion1} through \ref{eqn:recursion3} and the unknown function defined by Equation \ref{eqn:unknownFunc}, using the Matern-Sobolev kernel (a) with $\beta = 1$ and the estimate using the compactly supported kernel $\knl^{CS}_2$ (b). Both estimates use the same number of samples $M=37$ and decay away from the manifold. However, the estimate (a) is still nonzero everywhere while the estimate (b) itself quickly decays to zero away from the manifold as a consequence of its bases being compactly supported.}
\label{fig:approx4DiffKernels}
\end{figure}

 Similar to many methods that use kernels for applications in learning theory \cite{williams2006gaussian}, kernel families like the Sobolev-Matern kernels can depend on additional constants, the hyperparameters, that affect the shape of the kernel bases. The hyperparameters can have a profound effect on the convergence of finite-dimensional approximations, depending on the application. As noted in \cite{powell2020learning} and \cite{powell2021distribution}, oscillations in the estimate can be evident depending on how we choose the hyperparameter, $\beta$, from the Matern-Sobolev kernel function. Here we illustrate the improvement that can be achieved using $\beta$. Figure \ref{fig:betaTesting} illustrates estimates generated by the Matern-Sobolev kernel for various $\beta$. In these cases, larger values of $\beta $ decrease the decay of the kernel centers and lead to a wider spread of each kernel. From Figure \ref{fig:betaTesting} it is clear that estimates generated from the larger $\beta$ are ``flatter" and exhibit less drastic changes in the estimate over small distances. As the $\beta$ parameter is decreased the decay rate increases from the center and lowers the spread of each kernel function. This can lead to the oscillations seen in the estimates for smaller $\beta$ values.

\begin{figure*}[htp]
\begin{subfigure}{0.49\textwidth}
\centering
\includegraphics[width = \textwidth,height = 6cm]{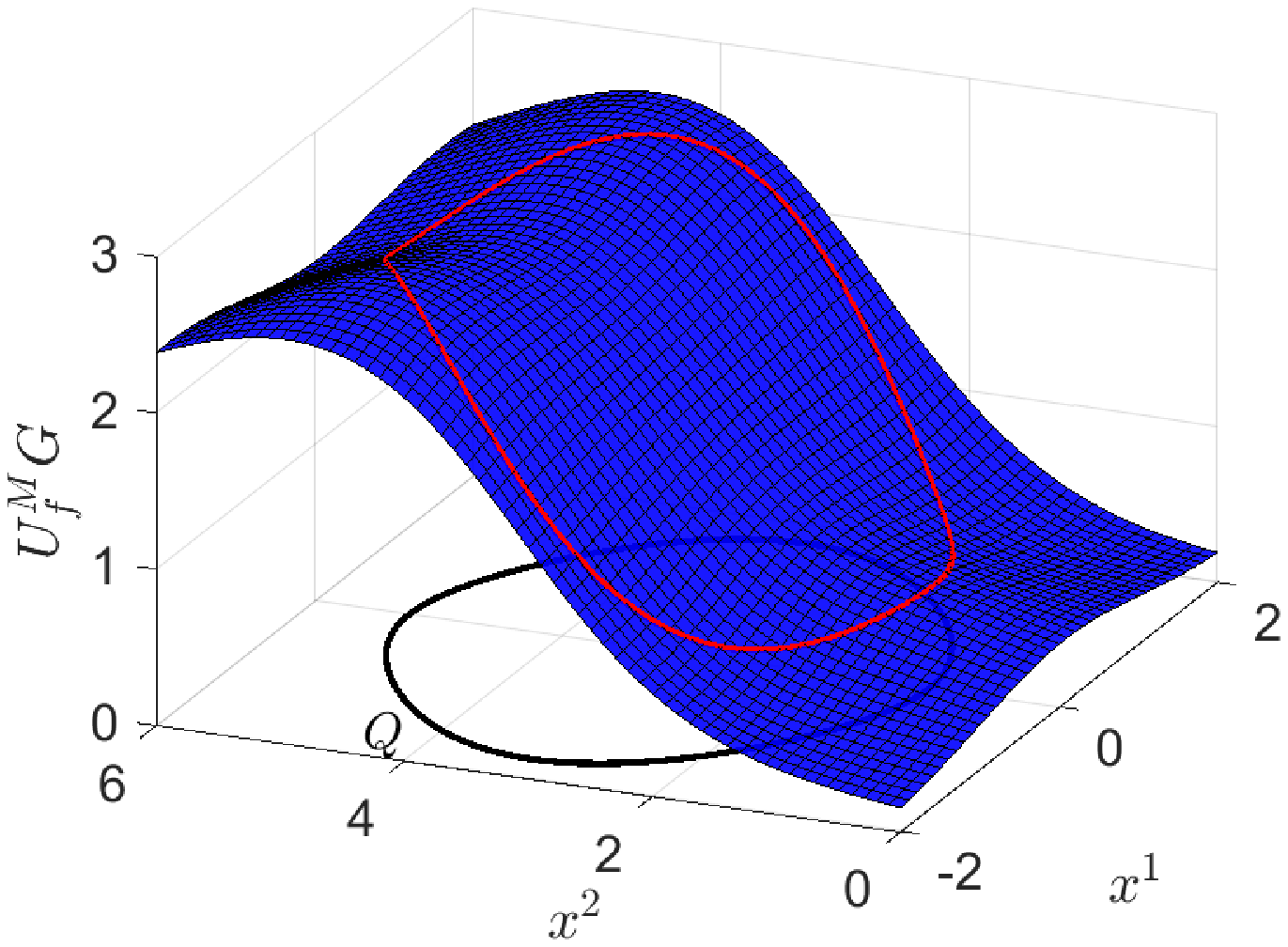}\caption*{ (a)}
\end{subfigure}
\begin{subfigure}{0.49\textwidth}
\centering
\includegraphics[width = \textwidth,height = 6cm]{SimDataEstimateBeta1.eps}\caption*{ (b)}
\end{subfigure}

\begin{subfigure}{0.49\textwidth}
\centering
\includegraphics[width = \textwidth,height = 6cm]{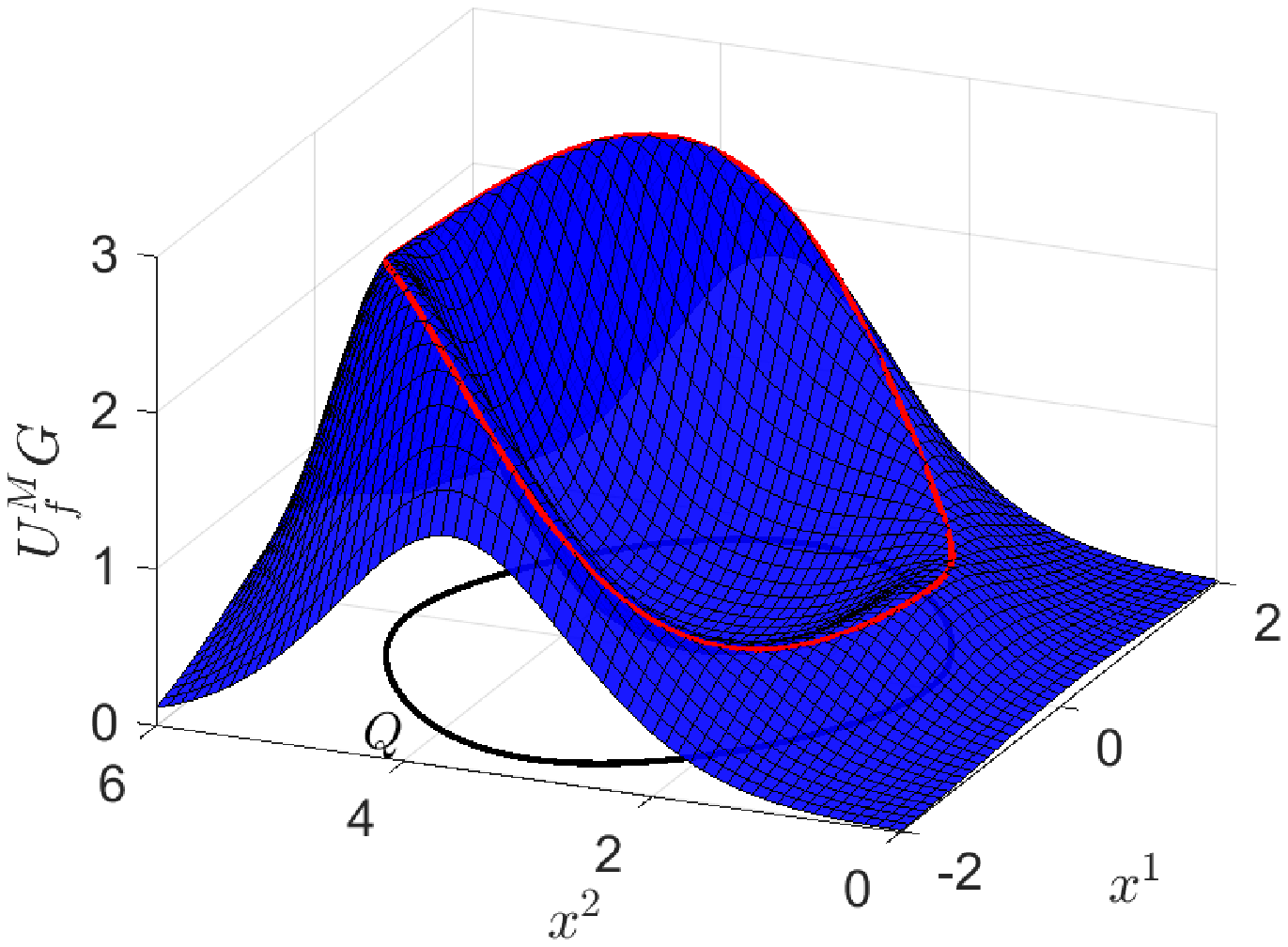}\caption*{ (c)}
\end{subfigure}
\begin{subfigure}{0.49\textwidth}
\centering
\includegraphics[width = \textwidth,height = 6cm]{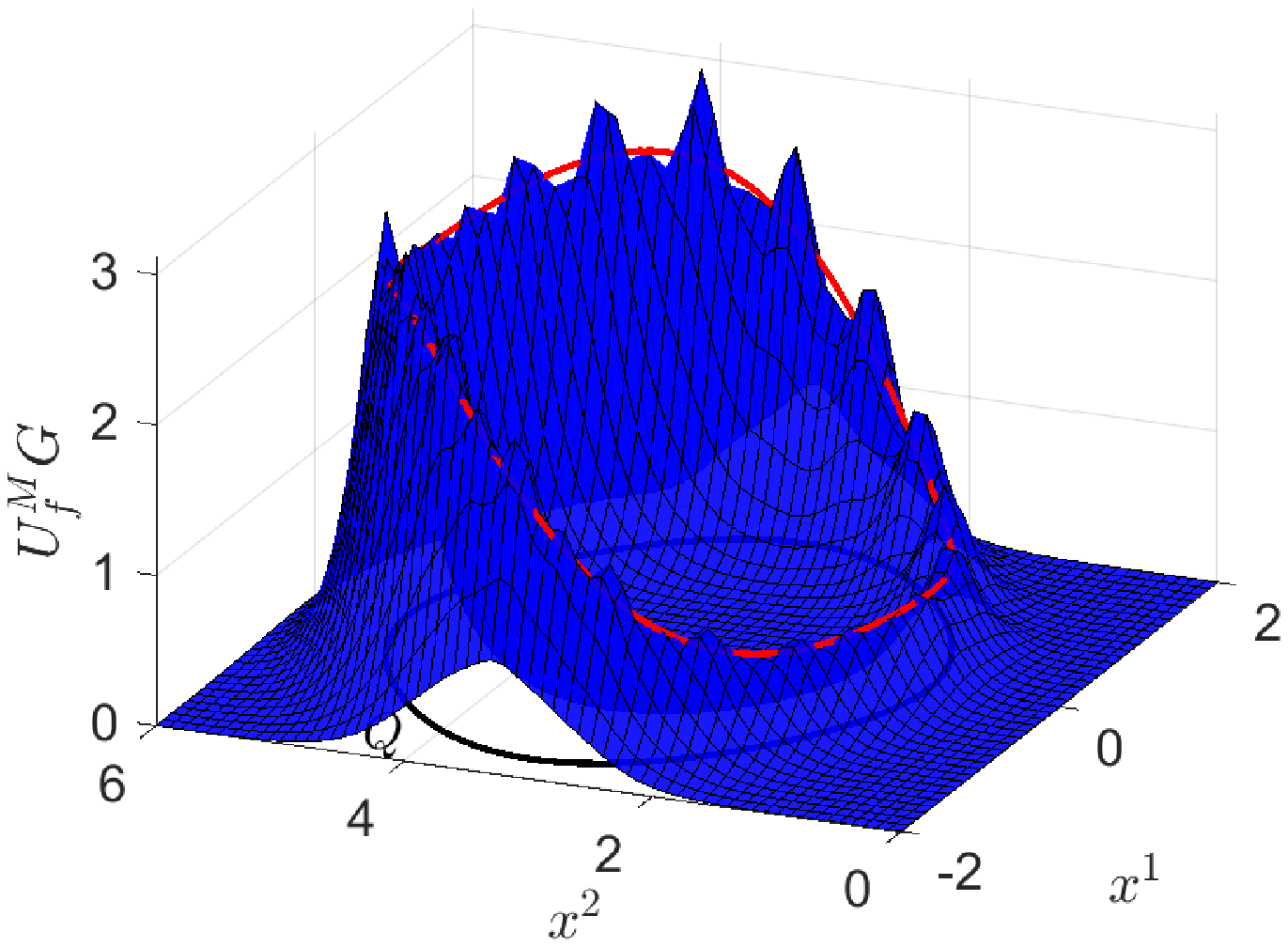}\caption*{ (d)}
\caption*{}
\end{subfigure}

\caption{The approximations $U^M_f G$ of the composition $U_fG$ defined by Equations \ref{eqn:recursion1} through \ref{eqn:recursion3} and the unknown function defined by Equation \ref{eqn:unknownFunc} for  various hyperparameters, $\beta$ of the Matern kernel: with $\beta= 5 $ (a), $\beta= 1 $ (b), $\beta= 0.5 $ (c),$\beta= 0.2$ (d). Each of the estimates is generated with the same number of centers $(M=37)$. Regardless of the hyperparameter selected, it is clear that each estimate is still determined by an algorithm which fits the estimate to the empirical samples collected along the red line over the manifold $Q$.}
\label{fig:betaTesting}
\end{figure*}

As illustrated in the comments above, Theorem \ref{th:edmd} can be an important tool in studying the fidelity of Koopman approximations over a manifold. As mentioned in the introduction and the theoretical results, the solution to the distribution free learning problem satisfies particular rates of convergence. Figure \ref{fig:errorEstimates} demonstrates the calculated error convergence of the estimate of the Koopman approximation to the true function as the number of samples increased. An increase in the number of centers collected on the manifold reduces the fill distance that bounds the error of the approximation. The decay rate is determined by selecting $t$ and $m$ as described in \cite{paruchuri2020koopman}. From the figure, it is clear that the error is bounded by $\mathcal{O}(h_{\Xi_M}^{t-m})$, which is represented by the dashed line above the curve. 

\begin{figure}[htp]
    \centering
     \includegraphics[width = \textwidth,trim={0 0.1cm 0 1cm},clip]{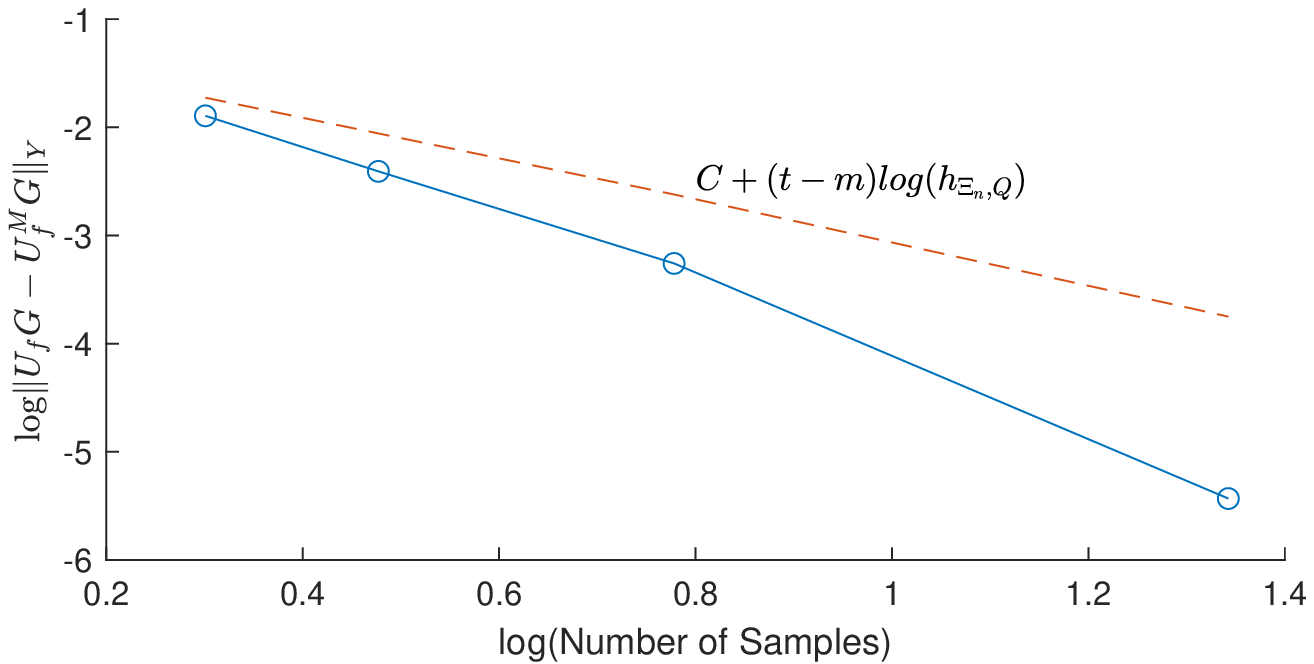}
    \caption{The rate of convergence of the estimate of the Koopman approximation. The rate of convergence is bounded by the fill distance represented by the dashed line above the curve. }
    \label{fig:errorEstimates}
\end{figure}

 Clearly, the accuracy in Theorem \ref{th:edmd} improves as the dimension of the space of approximation increase. However, practical use of the EDMD algorithm must also consider numerical stability as the dimension of function space increases. While stability of numerical approximations is well-documented in the literature on kernel approximations \cite{wendland2004scattered} it is not discussed anywhere in the recent literature on approximations of Koopman operators \cite{mezic2020koopman,klus2020}. For completeness, we review how numerical stability of Koopman approximations can be studied using results from kernel interpolation and approximation.

The solution, $U_f^M G$, of Equation \ref{eq:erisk1} comes from projecting the true solution, $G$, onto a subspace, $H_M = \text{span}\{\knl_{\xi_{M,i}}| \xi_{M,i} \in \Xi_M\}$, defined over the set of centers, $\Xi_M$. This solution satisfies the equation
\begin{equation}
    \bm{K}_M \bm{\alpha}^* = \bm{G}(\Xi_M)
    \label{eqn:exactCoefs}
\end{equation}
 where we define the kernel matrix $\bm{K}_M:=\KK(\Xi_M,\Xi_M)$, output vector $\bm{G}(\Xi_M):=[G(\xi_{M,1}),\dots ,G(\xi_{M,M})]^T$ and $\bm{\alpha}^* \in \RR^M$ is a vector of coefficients corresponding to the $H_M$ projection. However, noise in both the sample inputs and the output measurements as well as computer precision effectively results in perturbations, $\Delta \bm{K}_M$ and $\Delta \bm{G}$ of the kernel matrix $\bm{K}_M$ and the output vector $\bm{G}(\Xi_M)$ respectively. In practice, a numerical solution $\hat{\bm{\alpha}}$ is the exact solution of the nearby equation 
\begin{equation}
    (\bm{K}_M + \Delta \bm{K}_M) \hat{\bm{\alpha}} = \bm{G}(\Xi_M) +\Delta\bm{G}.
    \label{eqn:numericalCoefs}
\end{equation}
The bounds between the exact solution and the numerical calculation are known to satisfy
\begin{equation}
    \| \bm{\alpha}^* - \hat{\bm{\alpha}} \| \lesssim C \text{cond}(\bm{K}_M)\|\Delta \bm{K}_M\| \| \Delta \bm{G}\|.
    \label{eqn:stableBounds}
\end{equation}
where the condition number $\text{cond}(\bm{K}_M) = \|\bm{K}_M\| \|\bm{K}^{-1}_M\|$. Using norm equivalences, if the uniform norm is substituted into Equation \ref{eqn:stableBounds}, the logarithm of the condition number can be used to estimate the number of digits of accuracy lost.

We investigated the numerical stability of the EDMD estimates by examining the condition number of the kernel matrix from sets of kernel centers which differed by some predetermined quasi-uniform spacing.  Sample inputs were first generated by 256 steps of the recursion defined by Equations \ref{eqn:recursion1} through \ref{eqn:recursion3}. The centers used to generate the estimates were then selected to satisfy a particular fill distance. The number of centers used, therefore, ranged from 2 to 64 with decreases in the specified fill distance leading to an increase in the number of centers. Figure \ref{fig:uniformSeparationSmoothnessTest} demonstrates the condition number of kernel matrices using various compactly supported kernels as the minimum spacing between the kernel centers is decreased. The slopes of each curve demonstrate that for increasing order of kernels, which belong to more restricted smooth spaces, the computation becomes more unstable compared to the kernels which belong to lower order smoothness spaces. 
Similarly, Figure \ref{fig:uniformSeparationBetaTest} examines the condition number given the minimum spacing for kernel matrices defined by the Matern-Sobolev kernels with various hyperparameters, $\beta$. As mentioned previously, larger $\beta$ leads to a slower decay of the kernel function and, therefore, a larger  ``spread'' of the kernel about its center. From the figure it is evident that this larger spread increases the condition number of the corresponding kernel matrices. The numerical instability correspondingly suffers as is evident from the higher condition number for curves of increasing $\beta$. 
 
 Overall, in terms of our comments following Equation \ref{eqn:stableBounds} above, the range of the condition numbers in Figures \ref{fig:uniformSeparationSmoothnessTest} and \ref{fig:uniformSeparationBetaTest} is significant. Even with double precision calculations, many significant figures can be lost for these condition numbers. Thus, while Theorem \ref{th:edmd} provides a strong upper bound on accuracy, the upper bounds may not be realizable in practical computations. This topic has been the focus of much research in scattered data interpolation and approximation, but has not been studied in the EDMD literature.

On the other hand, we may be interested in the effect of non-uniform sampling. In Figures \ref{fig:uniformSeparationSmoothnessTest} and \ref{fig:uniformSeparationBetaTest}, the spacing between centers is relatively uniform amongst the samples and, therefore, the separation decreases along with the fill distance making it difficult to determine effects based solely on the separation. While the fill distance is the critical feature in determining the theoretical bounds on the error, the next set of results emphasized the numerical stability is dependant on the effects of the separation alone. As mentioned in \cite{wendland2004scattered}, lower bounds for $\lambda_{\text{Min}}(\bm{K}_M)$ can be solely determined as functions of the separation distance $s_{\Xi_M}$. Strictly speaking, the condition number depends on both the minimum and maximum eigenvalues of the kernel matrix. However, \cite{wendland2004scattered} notes that the condition number seems to depend most strongly on $\lambda_{\text{Min}}(\bm{K}_M)$. Specifically, the lower bound for the minimum eigenvalue of a kernel matrix defined by compactly supported kernels of $k$ smoothness as defined in \cite{wendland2004scattered} is a function $s_{\Xi_M}^{2k+1}$ of the separation distance. To investigate the effects of this decreasing bound, samples are generated following the recursion of Equations \ref{eqn:recursion1} through \ref{eqn:recursion3} for 256 iterations. Of those samples, centers are selected satisfying a particular fill distance along the manifold. An additional center $\xi_{M+1}$ is then generated at some smaller distance from the initial center $\xi_{M,1}$. Consequently, the fill distance in these examples remains constant for each curve while decreasing the separation alone. We then examine the minimum eigenvalue of the kernel matrix as a function of the spacing between two samples.

Figure \ref{fig:nonuniformSeparation} illustrates the effect of decreasing the separation distance between just two centers while keeping the fill distance constant. Each curve differs by the initially set fill distance of the set of centers. Decreases in the fill distance inadvertently result in decreases in the separation as demonstrated by the shift in decreasing direction for the minimum eigenvalue between curves. However, it can be clearly observed that, for a fixed fill distance, decreases in the separation alone will rapidly decrease the minimum eigenvalue and, thus, increase the condition number of the matrix and potentially lead to poor numerical results. 

\begin{figure}
    \centering
    \includegraphics[width = \textwidth,trim={0 0.1cm 0 0.1cm},clip]{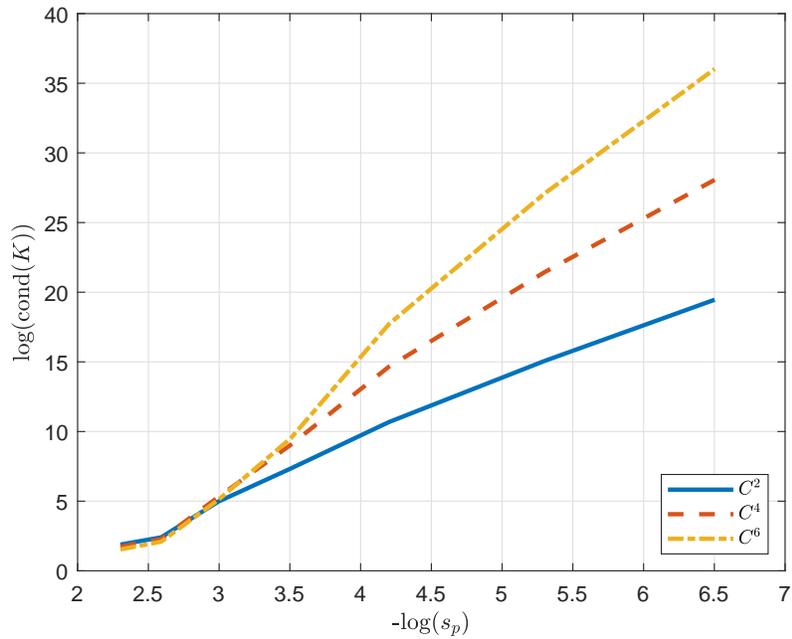}
    \caption{Three cases of the condition number of the kernel matrix, $\bm{K}_M$, versus the minimum separation distance of the collected centers from the simulated data. Each curve corresponds to the compactly supported Wendland functions of smoothness class $C^2$,$C^4$ and $C^6$. In this example, centers are quasi-uniformly distributed based on some predetermined spacing, $s_p$, which is of the same order as the fill distance, $h_{\Xi_M,Q}$, and the separation, $s_{\Xi_M}$. The number of centers ranged from  $M = 2$ for the largest separation to $64$ for the smallest. Note that as the separation distance decreases the condition number increases exponentially. Additionally, the condition number of the kernel matrix associated with higher order kernels is larger for the same separation distances.}
    \label{fig:uniformSeparationSmoothnessTest}
\end{figure}

\begin{figure}
    \centering
    \includegraphics[width = \textwidth,trim={0 0.1cm 0 0.1cm},clip]{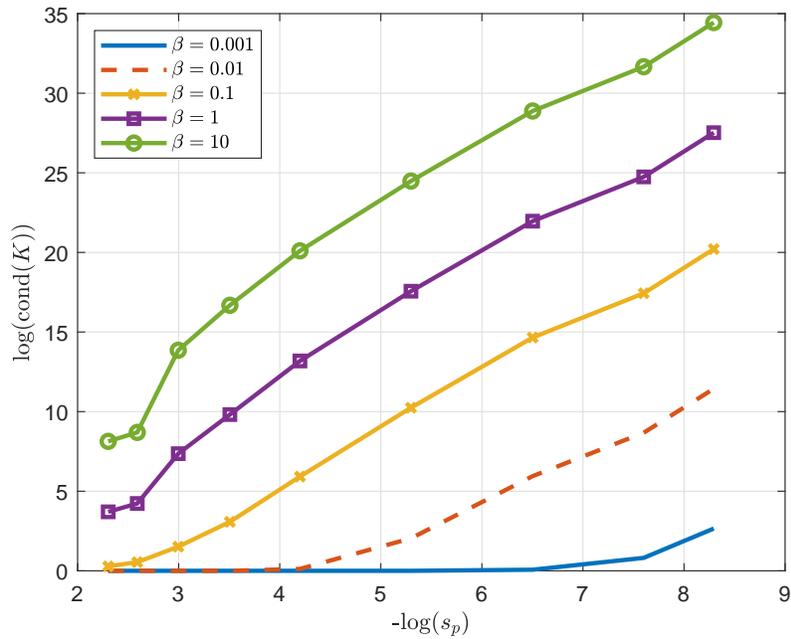}
    \caption{This figures demonstrates the relationship between the fill distance and the condition number for the Matern-Sobolev kernel, $\knl^{MS}_2$, for various values of the hyperparameter, $\beta$. In this example, centers are quasi-uniformly distributed based on some predetermined spacing, $s_p$, which is of the same order as the fill distance, $h_{\Xi_M,Q}$, and the separation, $s_{\Xi_M}$. Like the previous study, the number of centers,  $M$, ranged from $2$ to $64$ based on the separation distance. The smaller spread of the kernel function corresponding to smaller $\beta$ lowers the condition number for estimates of similar fill distances. However, the rate of increase given by the slope is determined by the kernel type as is, therefore, relatively the same for each curve in this example.}
    \label{fig:uniformSeparationBetaTest}
\end{figure}

\begin{figure}
    \centering
    \includegraphics[width = \textwidth,trim={0 0.1cm 0 0.1cm},clip]{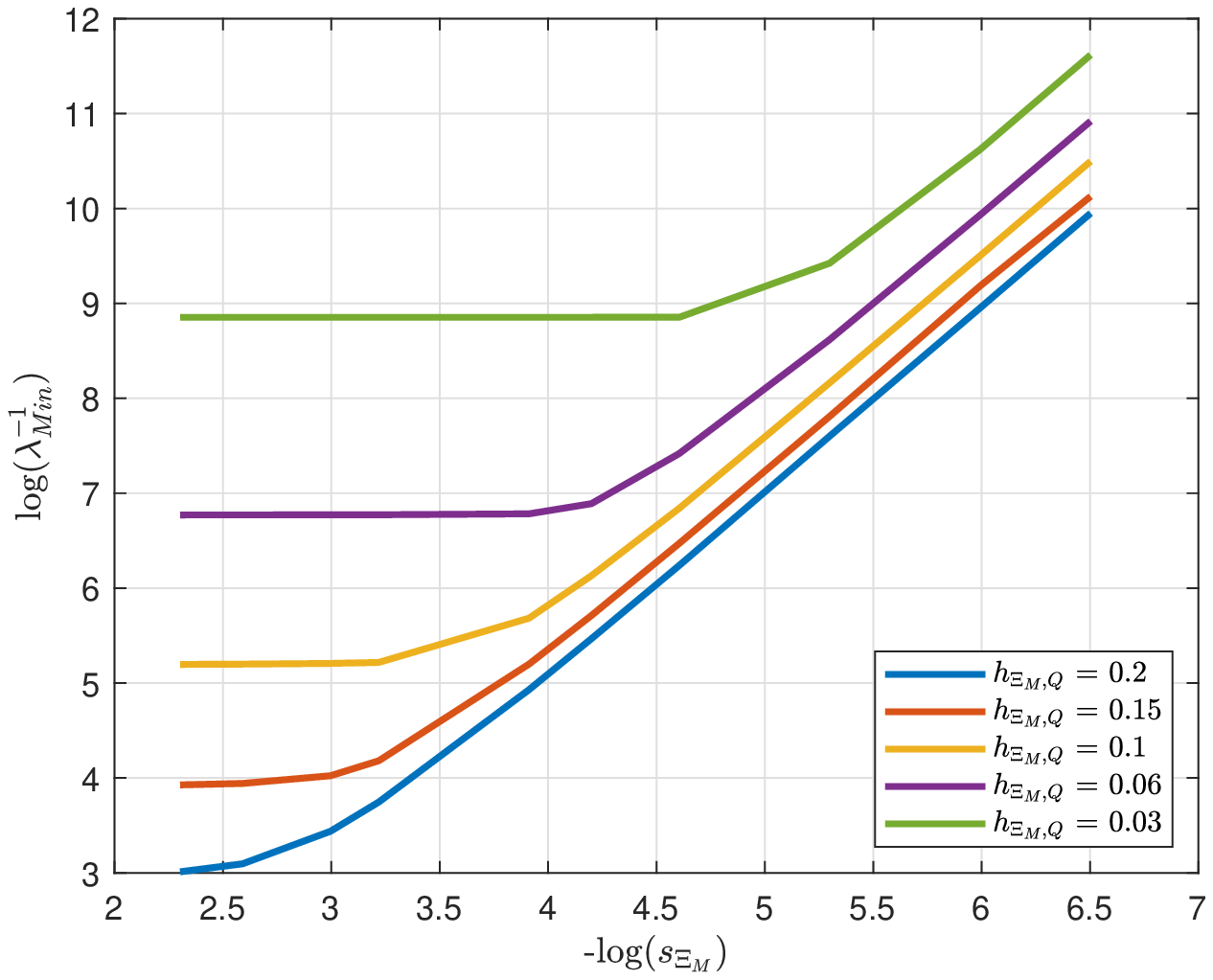}
    \caption{A figure illustrating the behavior of the minimum eigenvalue of the Kernel Matrix $K$ given the minimum separation distance of the collected centers from the simulate data. For each curve, the separation, $s_{\Xi_M}$ is decreased between only two centers while keeping a constant fill distance $h_{\Xi_M.Q}$. The number of centers per curve $M$ ranged from $7$ to $37$ based on the specified fill distance between the centers.}
    \label{fig:nonuniformSeparation}
\end{figure}

\subsection{Example: Human Kinematics Study}
Our second example uses three-dimensional motion capture data  collected in \cite{fukuchi2018public} during  a human  walking  experiment on  a treadmill. From their experiment, we collected 5000 marker coordinates   relative to a fixed inertial frame defined by the camera's fixed position. For this example, a small candidate kinematic model is defined from the full collection of experimental trajectories. The marker coordinates of the hip, knee, and ankle in the full data set are projected to the sagittal plane that divides the left and right half of the body see Figure \ref{fig:kinematicFigure}. With this projection, the joint angle  $\theta^{(1)}$ roughly corresponds to hip flexion, and it is  calculated from the projected vector $v^{(1)}$ that connects the hip to knee from the inertial $b^{(1)}$ vector in the plane. The knee flexion angle $\theta^{(2)}$ is  calculated by taking the dot product between the vector $v_1$ and the vector connecting the knee to the ankle $v^2$ in the sagittal plane. We then work with the assumption that samples of these joint angles come from some configuration manifold $Q$. Here the indexes $\{k_i\}_{i=1}^M$ and thus the sample pairs $\{(\xi_{M,i},\upsilon_{M,i})\}_{i=1}^M$ were chosen so that there was sufficient separation distance of at least 0.5 radians between the kernel centers, $\Xi_M$.  For illustrative purposes, we chose to examine the kinematic maps $G^{(1)}$ and $G^{(2)}$ from  the joint angles $\theta^{(1)}$ and $\theta^{(2)}$ to the ankle  $y^{(1)}$ and $y^{(2)}$ associated with the $b^{(1)}$ and $b^{(2)}$ vectors in the body-fixed frame. By choosing the outputs  $\{y^{(1)}_{k_i+1}\}_{i=1}^M= \{\upsilon^{(1)}_{M,i}\}_{i=1}^M =\Upsilon_M $ to be the ankle coordinates associated with the $b^{(1)}$ vector, we can visualize the Koopman operator acting on function $G^{(1)}:\theta^{(1)} \times \theta^{(2)} \to y^{(1)}$ as shown in Figure \ref{fig:kinematicEstimate}a. Similarly, we can define the outputs, $\Upsilon_M$ to be  $\{y^{(2)}_{k_i+1}\}_{i=1}^M$ to estimate the action of the Koopman operator on the function $G^{(2)}:\theta^{(1)} \times \theta^{(2)} \to y^(2)$ mapping to coordinates associated with the $b^{(2)}$ vector as is shown in Figure \ref{fig:kinematicEstimate}b.

Figure \ref{fig:kinematicEstimate} was generated using $M=4249$ samples and the Matern-Sobolev kernel with $\beta =2$. Both estimates are generated over the same submanifold $Q$ of the input space with coordinates $\theta^{(1)}$ and $\theta^{(2)}$. However, from the figure we can see that the Koopman operator acting on $y^{(1)}$, which corresponds to the height of the ankle from the hip, remains relatively constant throughout the motion while the Koopman operator acting on $y^(2)$, which associated with forward movement relative to the body, undergoes significant changes over the submanifold $Q$.

\begin{figure}[htp]
    \centering
     \includegraphics[width = \textwidth]{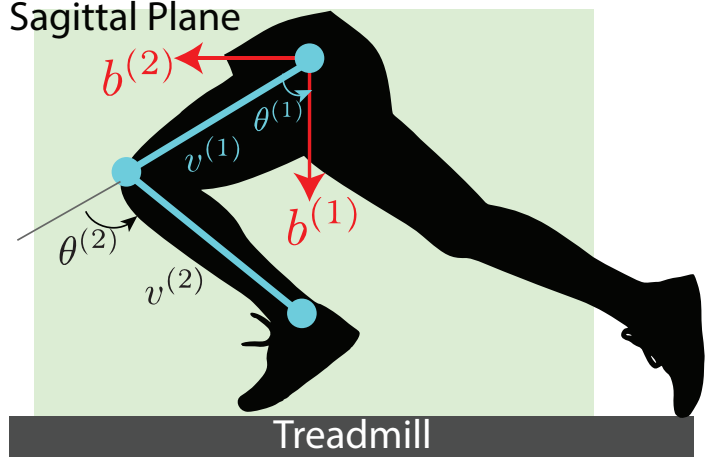}
    \caption{An illustration of the inputs $\theta^{(1)}$ and $\theta^{(2)}$, which roughly correspond to hip and knee flexion respectively.  These input variables can be mapped to measured marker coordinates placed on joints such as the knee or ankle.}
    \label{fig:kinematicFigure}
\end{figure}

\begin{figure}[htp]
\begin{subfigure}{\textwidth}
 \centering
     \includegraphics[width = \textwidth,trim={0 0.1cm 0 1cm},clip]{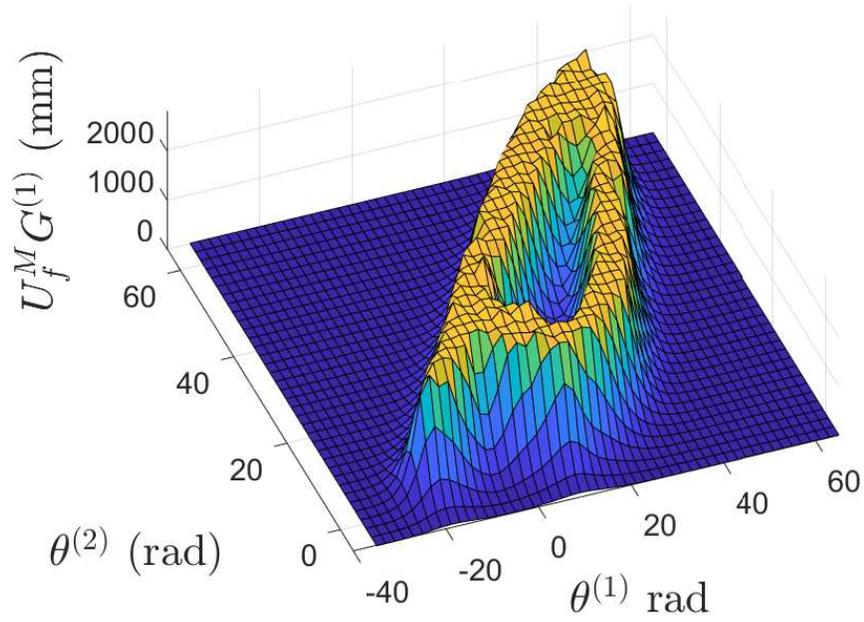}
    \caption*{(a)}
    
\end{subfigure}
\begin{subfigure}{\textwidth}
\centering
    \includegraphics[width = \textwidth,trim={0 0.1cm 0 1cm},clip]{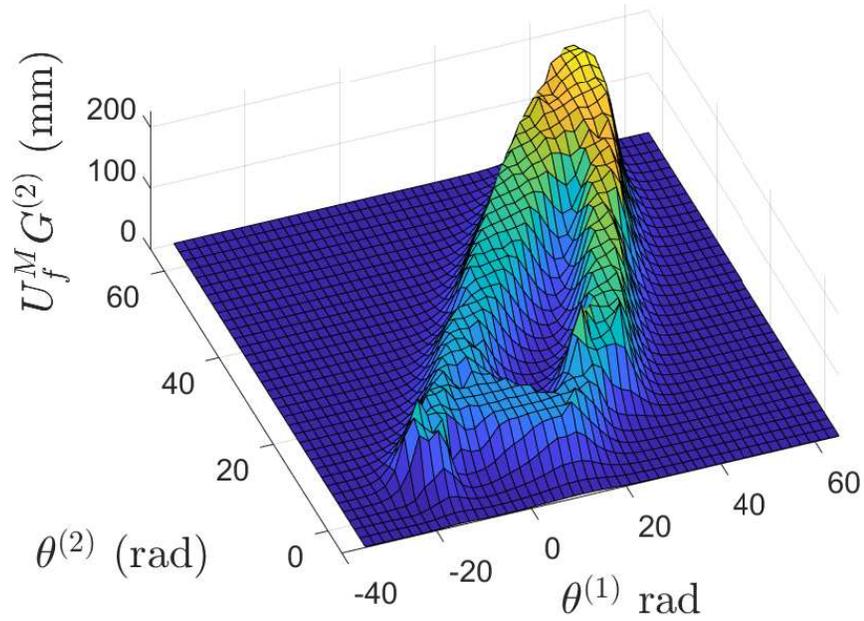}\caption*{ (b)}

\end{subfigure}
\caption{The estimates of the coordinates of the ankle associated with (a) the $b^{(1)}$ vector and  (b) the $b^{(2)}$ vector over the submanifold $Q$ of the input space given by the input coordinates $\theta^{(1)}$ and $\theta^{(2)}$. Here $5000$ samples were initially examined. Of those, $4249$ were selected that had sufficient separation of distance greater than $0.5$ radians between inputs.}
\label{fig:kinematicEstimate}
\end{figure}
\begin{figure}[htp]
   
\end{figure}

\section{Conclusion}
This paper has derived methods for estimation of forward kinematics of animal motions. We have derived a closed form expression for the empirical risk minimizer in a pullback RKHS space that depends on the unknown dynamics. We also have shown that approximations of the unknown function  $G:Q\rightarrow Y$ can be constructed in terms of a data-dependent approximation of the the Koopman operator. For the kernel based method, we have derived error bounds based on scattered samples and kernel dependent bases defined over $Q$. Finally, we have presented numerical and experimental results to better illustrate the qualitative behavior of the algorithms. 
\section*{Data Availability}
The datasets generated during and/or analysed during the current study are available from the corresponding author on reasonable request
\section*{Appendix}
The proof of convergence for the approximation of the Koopman operator in Section \ref{sec:data_driven} uses a version of the many zeros theorem that holds over compact, smooth, regularly embedded submanifolds $S\subset \RR^d$.
\begin{theorem}[\cite{fuselier}Lemma 10]
\label{th:manyz}
Let $Q$ be a $k$ dimensional smooth manifold, $s\in \RR$ with $s>k/2$, and $m\in \NN$ satisfy $0\leq m \leq \lceil s \rceil-1$. There there are constants $C,H$ such that if $h_{\Xi_M,Q}\leq H$ and $f\in W^{s,2}(Q)$ satisfies $f|_{\Xi_M}=0$, then $$
|f|_{W^{m,2}(Q)} \leq C h^{s-m}_{\Xi_M} |f|_{W^{s,2}(Q)}. 
$$
\end{theorem}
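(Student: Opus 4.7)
The plan is to reduce the statement on the manifold $Q$ to the well-known Euclidean version of the many-zeros theorem (as found, for example, in Wendland, Chapter 11) by working locally in coordinate charts and then reassembling the estimates globally via a partition of unity. Since $Q$ is compact and smooth, I would fix once and for all a finite atlas $\{(U_j,\phi_j)\}_{j=1}^J$ of coordinate charts with $\phi_j:U_j\to V_j\subset\RR^k$ bi-Lipschitz diffeomorphisms onto bounded open sets satisfying an interior cone condition (for instance by shrinking to convex balls), together with a smooth partition of unity $\{\chi_j\}_{j=1}^J$ subordinate to this cover. The constants produced by this atlas depend only on $Q$ and not on $f$ or $\Xi_M$.

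First I would pass to local coordinates. For each $j$, set $f_j:=(\chi_j f)\circ\phi_j^{-1}\in W^{s,2}(V_j)$, and let $\Xi_{M,j}:=\phi_j(\Xi_M\cap U_j)\subset V_j$. Because each $\phi_j$ is bi-Lipschitz on a compact chart, the Euclidean fill distance of $\Xi_{M,j}$ in $V_j$ is comparable to $h_{\Xi_M,Q}$ up to fixed constants, and the intrinsic Sobolev norms on $Q$ are equivalent to the corresponding local Euclidean Sobolev norms via the chart (this is essentially the definition of $W^{s,2}(Q)$ in the extrinsic/chart-based viewpoint used in \cite{fuselier,hangelbroek2012polyharmonic}). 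Since $f|_{\Xi_M}=0$, each $f_j$ vanishes on $\Xi_{M,j}$.

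Next I would invoke the Euclidean many-zeros theorem on each chart to conclude that there exist constants $C_j,H_j>0$ such that whenever the local fill distance is at most $H_j$,
\begin{equation*}
|f_j|_{W^{m,2}(V_j)} \;\lesssim\; h_{\Xi_M,Q}^{\,s-m}\,|f_j|_{W^{s,2}(V_j)}.
\end{equation*}
Taking $H:=\min_j H_j$ scaled by the chart Lipschitz constants guarantees that the hypothesis $h_{\Xi_M,Q}\le H$ suffices in every chart simultaneously. Finally I would reassemble: using norm equivalence through the charts and the partition of unity, together with the product rule to control $|\chi_j f|_{W^{s,2}}$ by $|f|_{W^{s,2}(Q)}$ (with constants depending only on the fixed atlas and the smooth cutoffs $\chi_j$), I would sum over $j$ and absorb the finite number of chart constants into a single $C$.

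The main obstacle is the fact that the function being localized is $\chi_j f$, not $f$ itself, and $\chi_j f$ generally does \emph{not} vanish on $\Xi_{M,j}$ only where $f$ does, which is fine, but differentiating the product introduces lower-order terms that one must control by the same fill-distance factor via an inductive or iterated application of the Euclidean many-zeros bound across seminorm orders. Equivalently, one can avoid this by applying the many-zeros estimate directly to $f$ in each chart (which still vanishes on $\Xi_{M,j}$) and then patching the full seminorm of $f$ on $Q$ from the chart seminorms using an equivalence of norms lemma — this is the cleaner route but requires the seminorm equivalence result on compact manifolds, which is standard but subtle near chart overlaps. Either way, the heart of the matter is transferring the Euclidean polynomial-reproduction / Bramble–Hilbert-type argument underlying the Euclidean many-zeros theorem to $Q$ without loss in the exponent $s-m$.
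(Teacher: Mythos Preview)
The paper does not supply its own proof of this theorem at all: it is quoted verbatim as Lemma~10 of \cite{fuselier} and used as a black box in the proof of Theorem~\ref{th:approx_koopman}. Your chart-localization strategy---pull back to Euclidean domains via a finite bi-Lipschitz atlas, apply the classical many-zeros/Bramble--Hilbert estimate on each $V_j\subset\RR^k$, and reassemble with a partition of unity---is precisely the argument that \cite{fuselier} carries out, so in that sense your proposal matches the original source rather than diverging from it.

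One small clarification on the obstacle you flag: since $f|_{\Xi_M}=0$, the localized function $(\chi_j f)\circ\phi_j^{-1}$ genuinely does vanish on all of $\Xi_{M,j}$, so the zero set is not the issue. The actual delicacy, as you say in your second route, is that the Leibniz rule gives $|\chi_j f|_{W^{s,2}}\lesssim \|f\|_{W^{s,2}}$ (full norm, not top seminorm), and one then needs either a norm-equivalence lemma on compact $Q$ or an iterated application of the Euclidean estimate to recover the seminorm on the right. Fuselier--Wright handle this by working with full Sobolev norms throughout and invoking the chart-based norm equivalence, which is the ``cleaner route'' you identify.
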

To apply this theorem in applications, we will need to choose the kernel $\knl$ that is defined on all of $X$ in such a way that that the space of restrictions $\RH$ to $Q$ is equivalent to a Sobolev space over $Q$. This can be achieved using the strategy outline on page 1759 of \cite{fuselier}. The kernel $\knl$ on $X\times X$ is said to have algebraic decay if its Fourier transform $\hat{\knl}$ has algebraic decay in the sense that
$ 
\hat{\knl}(\xi)\sim  (1+\|\xi\|_2^2)^{-\tau}
$ 
for $\tau>d/2$. If the kernel satisfies this algebraic decay condition, then it follows that $H=W^{\tau,2}(\RR^d)$. We then can use the following theorem, again from \cite{fuselier}.
\begin{theorem}[\cite{fuselier} Theorem 5]
If the kernel $\knl$ that induces $H$ satisfies the algebraic growth condition with exponent $\tau$, then we have 
$
\RH=T(H)=W^{\tau-(d-k)/2,2}(Q).
$
\end{theorem}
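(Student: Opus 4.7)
The plan is to prove the trace identification $\RH = T(H) = W^{\tau - (d-k)/2, 2}(Q)$ by reducing to the classical Euclidean trace theorem via a partition of unity and tubular neighborhood construction on the regularly embedded submanifold $Q \subset X$. First, I would use the algebraic decay hypothesis $\hat{\knl}(\xi) \sim (1+\|\xi\|_2^2)^{-\tau}$ to identify $H$ as norm-equivalent to the Bessel potential space $W^{\tau,2}(\RR^d)$: for a translation-invariant kernel, the inner product $\iH{g}{h}$ equals $\int \hat{g}(\xi)\overline{\hat{h}(\xi)}/\hat{\knl}(\xi)\,d\xi$, which is equivalent to the Bessel potential inner product. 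This step reduces the problem to a purely Sobolev-space trace question about restriction from $\RR^d$ to the embedded $k$-dimensional $Q$.

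Second, for the forward inclusion $T(H) \subseteq W^{\tau - (d-k)/2, 2}(Q)$, I would cover the compact $Q$ by a finite atlas $\{(U_i,\phi_i)\}$ of coordinate charts. Since the embedding is regular, each chart extends by the tubular neighborhood theorem to a diffeomorphism $\Phi_i: V_i \times N_i \to W_i \subset \RR^d$, where $V_i$ is open in $\RR^k$ and $N_i$ is a small neighborhood of the origin in $\RR^{d-k}$. Picking a subordinate partition of unity $\{\chi_i\}$ on $Q$ and smooth cutoffs $\tilde\chi_i$ on $W_i$ restricting to $\chi_i$ on $U_i$, I would estimate each piece $\chi_i (Tf) = T(\tilde\chi_i f)$ by transferring to the flat model via $\Phi_i$ and invoking the classical Euclidean trace theorem: the restriction from $W^{\tau,2}(\RR^d)$ to $W^{\tau-(d-k)/2,2}(\RR^k)$ is continuous whenever $\tau - (d-k)/2 > 0$. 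Summing the patch estimates and comparing to the intrinsic Sobolev norm on $Q$ (defined chart-wise as in \cite{hangelbroek2012polyharmonic}) yields the bound $\|Tf\|_{W^{\tau-(d-k)/2,2}(Q)} \lesssim \|f\|_{W^{\tau,2}(\RR^d)}$.

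Third, to establish surjectivity and the reverse inclusion, I would construct a bounded extension operator $E: W^{\tau-(d-k)/2,2}(Q) \to W^{\tau,2}(\RR^d)$ with $T \circ E = \mathrm{id}$. Localizing $g \in W^{\tau-(d-k)/2,2}(Q)$ via the same partition of unity and transferring to $V_i \subset \RR^k$, I would extend each piece to $V_i \times N_i$ by a tensor product with a fixed bump in the normal variable (or by a Stein-type half-space extension), then push forward by $\Phi_i$ and sum. The key Fourier computation on the flat model shows that such a normal extension sends $W^{s,2}(\RR^k)$ continuously into $W^{s + (d-k)/2, 2}(\RR^d)$, which is precisely the regularity arithmetic required. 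Combined with Step 2, the bounded two-sided map shows $T$ is a bounded surjection with bounded right inverse, and the open mapping theorem identifies $\RH = T(H)$ with $W^{\tau-(d-k)/2,2}(Q)$ as topological vector spaces with equivalent norms.

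The main obstacle will be the technical bookkeeping in the partition-of-unity argument: one must verify that the localized Sobolev norms assemble correctly into the intrinsic Sobolev norm on $Q$, that multiplication by the cutoffs $\tilde\chi_i$ preserves $W^{\tau,2}(\RR^d)$ membership with controlled norm, and that the tubular neighborhood changes of variable preserve Sobolev regularity uniformly across patches (which requires bounds on derivatives of $\Phi_i$ and $\Phi_i^{-1}$, ensured by compactness of $Q$). A secondary subtlety is that the normal extension is not canonical, so the extension operator must be defined consistently across overlapping charts; this is handled by constructing $E$ once using the global tubular neighborhood diffeomorphism and a single global partition of unity, rather than chart by chart, so that well-definedness is automatic.
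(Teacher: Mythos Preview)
The paper does not prove this statement: it is quoted verbatim as Theorem~5 of \cite{fuselier} and invoked as an external result, with no argument supplied beyond the preceding remark that the algebraic decay condition on $\hat{\knl}$ gives $H=W^{\tau,2}(\RR^d)$. So there is no ``paper's own proof'' to compare against.

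That said, your proposal is a sound reconstruction of the standard argument and is essentially the route taken in \cite{fuselier}: identify $H$ with the Bessel potential space via the Fourier decay hypothesis, then reduce the trace question on the embedded $k$-manifold to the flat Euclidean trace theorem $W^{\tau,2}(\RR^d)\to W^{\tau-(d-k)/2,2}(\RR^k)$ through a finite atlas, tubular neighborhoods, and a subordinate partition of unity, with surjectivity handled by a bounded right inverse built from a normal-direction extension. The obstacles you flag (uniform control of chart diffeomorphisms, multiplier bounds for cutoffs, and consistency of the extension across charts) are exactly the technical points that need attention, and compactness of $Q$ together with smoothness of the embedding resolves them. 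Nothing in your outline is wrong or missing a key idea.
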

\bibliographystyle{unsrt}

\bibliography{CDCPaperRef}
%
%%  overflow
%
\clearpage
%   cut for now 
%

\end{document}